\icmltitlerunning{Complexity Analysis of the Lasso Regularization Path}
\def\x{{\mathbf x}}
\def\p{{\mathbf p}}
\def\1{{\mathbf 1}}
\def\X{{\mathbf X}}
\def\etab{{\boldsymbol\eta}}
\def\tildeetab{{\boldsymbol{\tilde{\eta}}}}
\def\y{{\mathbf y}}
\def\w{{\mathbf w}}
\def\u{{\mathbf u}}
\def\tildeX{{\bf \tilde X}}
\def\tildey{{\bf \tilde y}}
\def\tildew{{\bf \tilde w}}
\def\Real{{\mathbb R}}
\def\u{{\mathbf u}}
\def\argmin{\operatornamewithlimits{arg\,min}}
\def\sign{\operatorname{sign}}
\def\st{~~\text{s.t.}~~}
\def\defin{\triangleq}
 \newcommand{\INPUT}{ \STATE {\textbf{Inputs:}} }
\def \kappab {{\boldsymbol\kappa}}
\long\def\symbolfootnote[#1]#2{\begingroup\def\thefootnote{\fnsymbol{footnote}}\footnote[#1]{#2}\endgroup} 
\newtheorem{theorem}{Theorem}
\newtheorem{lemma}{Lemma}
\newtheorem{proposition}{Proposition}
 \newtheorem{definition}{Definition}
\begin{document} 

\twocolumn[
\icmltitle{Complexity Analysis of the Lasso Regularization Path}

\icmlauthor{Julien Mairal}{julien@stat.berkeley.edu}
\icmlauthor{Bin Yu}{binyu@stat.berkeley.edu}
\icmladdress{Department of Statistics, University of California, Berkeley.}

\icmlkeywords{Lasso, regularization path, homotopy}

\vskip 0.2in  
]

\begin{abstract} 
The regularization path of the Lasso can be shown to be piecewise linear,
making it possible to ``follow'' and explicitly compute the entire path. We
analyze in this paper this popular strategy, and prove that its worst case
complexity is exponential in the number of variables.  We then oppose this
pessimistic result to an (optimistic) approximate analysis: We show that an
approximate path with at most $O(1/\sqrt{\varepsilon})$ linear segments can
always be obtained, where every point on the path is guaranteed to be optimal
up to a relative $\varepsilon$-duality gap. We complete our theoretical
analysis with a practical algorithm to compute these approximate paths.

\end{abstract} 

\vspace*{-0.5cm}
\section{Introduction}
\vspace*{-0.08cm}
Without a priori knowledge about data, it is often difficult to estimate
a model or make predictions, either because the number
of observations is too small, or the problem dimension too high. When a problem
solution is known to be sparse, sparsity-inducing penalties have proven to be
useful to improve both the quality of the prediction and its intepretability.
In particular, the~$\ell_1$-norm has been used for that purpose in the Lasso formulation~\citep{tibshirani}.

Controlling the regularization often requires to tune a
parameter.  In a few cases, the regularization path---that is, the set of
solutions for all values of the regularization parameter, can be shown to be
piecewise linear~\citep[][]{rosset}. This property is exploited in
homotopy methods, which consist of following the piecewise linear path by
computing the direction of the current linear segment and the points where the
direction changes (also known as kinks).  Piecewise linearity of regularization
paths was discovered by~\citet{markowitz} for portfolio selection; it was
similarly exploited by~\citet{osborne2} and~\citet{efron} for the Lasso, and
by~\citet{hastie} for the support vector machine (SVM).  As observed
by~\citet{gartner}, all of these examples are in fact particular instances of
\emph{parametric quadratic programming} formulations, for which path-following
algorithms appear early in the optimization literature~\citep{ritter}.

In this paper, we study the number of linear segments of the Lasso
regularization path. Even though experience with data suggests that this
number is linear in the problem size~\citep{rosset}, it is known that discrepancies can be observed
between worst-case and empirical complexities. This is notably the case for the
simplex algorithm~\citep{dantzig}, which performs empirically well for
solving linear programs even though it suffers from exponential worst-case
complexity~\citep{klee}.  Similarly, by using geometrical
tools originally developed to analyze the simplex algorithm,~\citet{gartner} have shown that the
complexity of the SVM regularization path can be exponential.  However, to the
best of our knowledge, none of these results do apply to the Lasso regularization path,
whose theoretical complexity remains unknown. The goal of our paper is to fill
in this~gap. 

Our first contribution is to show that in the worst-case the number of linear
segments of the Lasso regularization path is exactly~$(3^p\!+\!1)/2$, where~$p$ is
the number of variables (predictors). We remark that our proof is constructive and significantly
different than the ones proposed by~\citet{klee} for the simplex algorithm and
by~\citet{gartner} for SVMs. Our approach does not rely on geometry but on an
adversarial scheme. Given a Lasso problem with~$p$ variables, we show how to
build a new problem with~$p+1$ variables increasing the complexity of the path
by a multiplicative factor.  It results in explicit pathological examples that are
surprisingly simple, unlike pathological examples for the simplex algorithm
or~SVMs.

Worst-case complexity analyses are by nature pessimistic. Our second
contribution on approximate regularization paths is more optimistic.
In fact, we show that an approximate path
for the Lasso with at most $O(1/\sqrt{\varepsilon})$ segments can always be
obtained, where every point on the path is guaranteed to be optimal up to a
relative $\varepsilon$-duality gap. We follow in part the methodology
of~\citet{giesen2} and~\citet{giesen}, who have presented weaker results but in a more general setting for
parameterized convex optimization problems. Our analysis builds upon approximate
optimality conditions, which we maintain along the path, leading to a practical approximate homotopy algorithm.

The paper is organized as follows: Section~\ref{sec:related} presents some
brief overview of the Lasso.  Section~\ref{sec:worst} is devoted to our worst-case
complexity analysis, and Section~\ref{sec:approx} to our results on approximate
regularization paths.

\section{Background on the Lasso}\label{sec:related}
In this section, we present the Lasso formulation of~\citet{tibshirani}
and well known facts, which we exploit later in our analysis.
For self-containedness and clarity reasons we include simple proofs of these results.
Let~$\y$ be a vector in~$\Real^n$ and~$\X=[\x^1,\ldots,\x^p]$ be a matrix in~$\Real^{n \times p}$. The Lasso is formulated as:
\begin{equation}
   \min_{\w \in \Real^p} \frac{1}{2}\|\y-\X\w\|_2^2 + \lambda \|\w\|_1,\label{eq:lasso}
\end{equation}
where the~$\ell_1$-norm induces sparsity in the solution~$\w$
and~$\lambda\!>\!0$ controls the amount of regularization.
Under a few assumptions, which are detailed in the sequel,
the solution of this problem is
unique. We denote it by~$\w^\star(\lambda)$ and
define the \emph{regularization path} ${\mathcal P}$ as the set of all solutions for all
positive values of~$\lambda$:\footnote{For technicality reasons, we enforce~$\lambda\!>\!0$ even though the limit $\w^\star(0^+) \defin \lim_{\lambda
\to 0^+} \w^\star(\lambda)$ may exist.}
 $${\mathcal P} \defin \{ \w^\star(\lambda) : \lambda > 0 \}.$$
 The following lemma presents classical optimality and uniqueness conditions for the Lasso solution~\citep[see][]{fuchs}, which are useful to characterize~$\mathcal P$:
\begin{lemma}[{\bfseries Optimality Conditions of the Lasso}]\label{lemma:opt}
A vector~$\w^\star$ in~$\Real^p$ is a solution of Eq.~(\ref{eq:lasso}) if and only if for all~$j$ in $\{1,\ldots,p\}$, 
\begin{equation}
   \begin{split}
   \x^{j\top}(\y-\X\w^\star) &=\lambda\sign(\w^\star_j) ~~\text{if}~~ \w^\star_j \neq 0, \\
   |\x^{j\top}(\y-\X\w^\star)|  &\leq\lambda ~~\text{otherwise}.
   \end{split} \label{eq:opt}
\end{equation}
Define $J \defin \{ j \in \{1,\ldots,p\} : |\x^{j\top}(\y-\X\w^\star)|=\lambda \}$.
Assuming the matrix~$\X_J=[\x^j]_{j\in J}$ to be full rank, the solution is unique and we have  
\begin{equation}
\w_J^\star =
(\X_J^\top\X_J)^{-1}(\X_J^\top\y-\lambda \etab_J), \label{eq:closed}
\end{equation}
where $\etab \defin \sign(\X^\top(\y-\X\w^\star))$ is in $\{-1; 0; +1\}^p$, and
the notation~$\u_J$ for a vector~$\u$ denotes the vector of size~$|J|$
recording the entries of~$\u$ indexed by~$J$.
\end{lemma}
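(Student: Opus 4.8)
The plan is to treat the two assertions separately. The objective $f(\w) \defin \frac{1}{2}\|\y-\X\w\|_2^2 + \lambda\|\w\|_1$ is convex and finite everywhere, so $\w^\star$ is a global minimizer if and only if $0 \in \partial f(\w^\star)$. I would compute this subdifferential by splitting $f$ into its smooth part, whose gradient is $-\X^\top(\y-\X\w^\star)$, and the nonsmooth part $\lambda\|\cdot\|_1$, whose subdifferential at $\w^\star$ has $j$-th coordinate equal to $\sign(\w^\star_j)$ when $\w^\star_j \neq 0$ and any value in $[-1,1]$ when $\w^\star_j = 0$. Reading off the inclusion $0 \in \partial f(\w^\star)$ coordinate by coordinate yields exactly Eq.~(\ref{eq:opt}): equality to $\lambda\sign(\w^\star_j)$ on the support, and the inequality $\leq \lambda$ off the support. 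This is the standard subgradient argument and I do not expect any difficulty here.

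For the closed form, I would first observe that the support of any solution is contained in $J$: if $\w^\star_j \neq 0$ then the first line of Eq.~(\ref{eq:opt}) forces $|\x^{j\top}(\y-\X\w^\star)| = \lambda$, so $j \in J$. Hence $\w^\star_{J^c} = 0$ and $\X\w^\star = \X_J\w^\star_J$. Stacking the equality conditions over all $j \in J$ gives $\X_J^\top(\y-\X_J\w^\star_J) = \lambda\etab_J$, that is $\X_J^\top\X_J\,\w^\star_J = \X_J^\top\y - \lambda\etab_J$. The full-rank assumption on $\X_J$ makes $\X_J^\top\X_J$ invertible, so inverting it produces Eq.~(\ref{eq:closed}).

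It remains to prove uniqueness, which I view as the delicate step because the quadratic loss need not be strictly convex in $\w$ when $\X$ lacks full column rank. The key is that the fitted vector $\X\w^\star$ is common to all solutions. I would argue this by a strict-convexity-along-segments device: if $\w_1,\w_2$ are two minimizers with $\X\w_1 \neq \X\w_2$, then along $\w_t \defin (1-t)\w_1 + t\w_2$ the loss is a quadratic in $t$ with strictly positive leading coefficient $\|\X(\w_2-\w_1)\|_2^2$ while $\|\w_t\|_1$ is convex, so $f(\w_t)$ would be strictly convex and $f(\w_{1/2})$ would fall strictly below the common optimal value, a contradiction. Therefore $\X\w^\star$, and with it the residual $\y-\X\w^\star$, the correlations $\x^{j\top}(\y-\X\w^\star)$, the active set $J$, and the sign vector $\etab_J$, are all independent of the chosen solution. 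Since every solution is supported on $J$ and satisfies the same invertible linear system $\X_J^\top\X_J\,\w^\star_J = \X_J^\top\y - \lambda\etab_J$, the block $\w^\star_J$ is uniquely pinned down while $\w^\star_{J^c} = 0$; hence the minimizer is unique and equals Eq.~(\ref{eq:closed}).
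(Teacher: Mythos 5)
Your proof is correct, and while your first two parts (the subgradient derivation of Eq.~(\ref{eq:opt}) and the closed form~(\ref{eq:closed}) obtained by stacking the equalities over $J$ and inverting $\X_J^\top\X_J$) coincide with the paper's argument, your uniqueness step takes a genuinely different route. The paper considers a second solution $\w^{\prime\star}$, forms the convex combination $\w^{\theta\star} \defin \theta\w^\star + (1-\theta)\w^{\prime\star}$ (itself a solution), shows its coordinates off $J$ vanish because the corresponding correlations stay strictly below $\lambda$, and then invokes strict convexity of the reduced problem $\min_{\tildew} \frac{1}{2}\|\y - \X_J\tildew\|_2^2 + \lambda\|\tildew\|_1$ --- strictly convex precisely because $\X_J^\top\X_J$ is positive definite --- to force $\w^{\theta\star}_J = \w^\star_J$ and hence $\w^{\prime\star} = \w^\star$. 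You instead first prove that the fitted vector $\X\w^\star$ (hence the residual, the correlations, the set $J$, and the sign vector $\etab$) is shared by all solutions, via strict convexity of the quadratic along any segment joining two solutions with different fits, and then observe that every solution is supported on $J$ and solves the single invertible system $\X_J^\top\X_J\w_J = \X_J^\top\y - \lambda\etab_J$. Your decomposition has the advantage of isolating the rank-free part of the argument (uniqueness of the fit holds for arbitrary $\X$) from the only place the full-rank hypothesis enters (inverting the system), and it makes the uniqueness proof literally re-derive Eq.~(\ref{eq:closed}); the paper's version stays inside one convex-combination argument and introduces the reduced problem as a device, which it reuses later (e.g., in the proof of Proposition~\ref{prop:upper}). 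One minor point, common to both proofs and worth being explicit about: the stacked system over all $j \in J$ (not just the support of $\w^\star$) follows from the definitions of $J$ and $\etab$ --- for $j\in J$ the correlation has magnitude exactly $\lambda$, so it equals $\lambda\etab_j$ --- and not from Eq.~(\ref{eq:opt}) alone, whose equality is asserted only where $\w^\star_j \neq 0$.
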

\begin{proof}
   Eq.~(\ref{eq:opt}) can be obtained by considering subgradient
   optimality conditions.  These can be written as
    $  0 \in \{ -\X^\top(\y-\X\w^\star) +
      \lambda \p  :  \p  \in \partial \|\w^\star\|_1 \}$,
   where $\partial\|\w^\star\|_1$ denotes the subdifferential of the $\ell_1$-norm at~$\w^\star$. A classical result~\citep{borwein}
   says that the subgradients $\p$ are the vectors in~$\Real^p$ such that for all~$j$ in
   $\{1,\ldots,p\}$, $\p_j=\sign(\w^\star_j)$ if $\w^\star_j \neq
   0$, and $|\p_j| \leq 1$ otherwise.  This gives Eq.~(\ref{eq:opt}).
   The equalities in Eq.~(\ref{eq:opt}) define a linear system that has a
   unique solution given by~(\ref{eq:closed}) when~$\X_J$ is full~rank. 

   Let us now show the uniqueness of the Lasso solution. Consider another solution $\w^{\prime\star}$ and choose a scalar~$\theta$ in $(0,1)$.
By convexity, $\w^{\theta \star} \defin \theta \w^\star +
(1-\theta)\w^{\prime\star}$ is also a solution. 
For all $j \notin J$, we have $|\x^{j\top}(\y-\X\w^{\theta\star})| \leq \theta|\x^{j\top}(\y-\X\w^{\star})| + (1-\theta) |\x^{j\top}(\y-\X\w^{\prime\star})| <  \lambda$.
Combining this inequality with the conditions~(\ref{eq:opt}), we necessarily have $\w_{J^\complement}^{\theta\star}=\w_{J^\complement}^{\star}=0$,\footnote{$J^\complement$ denotes the complement of the set~$J$ in~$\{1,\ldots,p\}$.}
and the vector~$\w^{\theta\star}_J$ is also a solution of the following reduced
problem:
\begin{displaymath}
\vspace*{-0.1cm}
   \min_{\tildew \in \Real^{|J|}} \frac{1}{2}\|\y-\X_J\tildew\|_2^2 + \lambda \|\tildew\|_1. 
\vspace*{-0.1cm}
\end{displaymath}
When~$\X_J$ is full rank, the Hessian~$\X_J^\top\X_J$ is
positive definite and this reduced problem is strictly convex. Thus, it admits a unique
solution $\w^{\theta\star}_J=\w^\star_J$. It is then easy to conclude that
$\w^\star=\w^{\theta\star}=\w^{\prime\star}$. 
\end{proof}
\vspace*{-0.1cm}
With the assumption that the matrix~$\X_J$ is always full-rank, we can formally recall a well-known
property of the Lasso~\citep[see][]{markowitz,osborne2,efron} in the following lemma:
\begin{lemma}[{\bfseries Piecewise Linearity of the Path}]~\label{lemma:uniqpath}\newline
Assume that for any~$\lambda >0$ and solution of Eq.~(\ref{eq:lasso}) the matrix~$\X_J$ defined in Lemma~\ref{lemma:opt} is full-rank. Then, the regularization path~$\{ \w^\star(\lambda) : \lambda > 0\}$ is
well defined, unique and continuous piecewise linear.
\end{lemma}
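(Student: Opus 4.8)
The plan is to turn the closed-form expression~\refEq{eq:closed} into a global description of the path. The guiding idea is that the solution is pinned down by two finite pieces of combinatorial data---the active set~$J$ and the sign pattern~$\etab_J\in\{-1,+1\}^{|J|}$---and that, once these are fixed,~\refEq{eq:closed} is an \emph{affine} function of~$\lambda$. Since there are only finitely many such pairs, the path can have only finitely many affine pieces.

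First I would settle the ``well defined and unique'' claim. For every fixed $\lambda>0$ the objective of~\refEq{eq:lasso} is continuous, convex, and coercive, because $\frac12\|\y-\X\w\|_2^2+\lambda\|\w\|_1\geq\lambda\|\w\|_1$ tends to infinity as $\|\w\|_1\to\infty$; hence its sublevel sets are compact and a minimizer exists. Under the hypothesis that the relevant $\X_J$ is full rank, \refLemma{lemma:opt} guarantees this minimizer is unique, so $\w^\star(\lambda)$ is well defined for all $\lambda>0$.

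Next, for each subset $J\subseteq\{1,\dots,p\}$ with $\X_J$ full rank and each sign vector $\etab_J\in\{-1,+1\}^{|J|}$, I would introduce the candidate vector $\w_{J,\etab}(\lambda)$ defined by~\refEq{eq:closed} on the coordinates in $J$ and by $0$ elsewhere; this is an affine function of $\lambda$. By \refLemma{lemma:opt}, for each $\lambda$ the true solution $\w^\star(\lambda)$ equals one such candidate, namely the one whose $J$ and $\etab_J$ are the active set and signs of $\w^\star(\lambda)$. The set $V_{J,\etab}$ of values $\lambda>0$ for which $\w_{J,\etab}(\lambda)$ actually satisfies~\refEq{eq:opt}---i.e., the sign of each active coordinate agrees with $\etab_J$ in the non-strict sense, and $|\x^{j\top}(\y-\X\w_{J,\etab}(\lambda))|\leq\lambda$ for $j\notin J$---is cut out by finitely many inequalities that are all \emph{affine} in $\lambda$. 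Hence each $V_{J,\etab}$ is a closed interval, and these finitely many intervals cover $(0,\infty)$.

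It remains to organize these intervals into the piecewise-linear structure and to prove continuity. Collecting the finitely many endpoints of the nonempty intervals $V_{J,\etab}$ yields a finite set of breakpoints $0<b_1<\dots<b_m$. On each open slab between consecutive breakpoints, every covering interval $V_{J,\etab}$ either contains the slab entirely or is disjoint from its interior (all its endpoints being breakpoints), so a single pair $(J,\etab)$ is valid throughout and there $\w^\star$ coincides with the affine map $\w_{J,\etab}$; this yields finitely many linear segments. For continuity at a breakpoint $b_i$, the two adjacent segments come from pairs whose closed validity intervals both contain $b_i$, so both candidates are genuine solutions at $b_i$; by the uniqueness already established they must coincide at $b_i$, so the affine pieces match there. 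I expect the main obstacle to be exactly this continuity/matching step---making rigorous that the closedness of the validity intervals, together with uniqueness, forces adjacent affine pieces to agree at the kinks---and, relatedly, ruling out the solution jumping between two distinct pairs inside a single slab, which the elementary observation that two affine functions agreeing on an interval must be identical handles.
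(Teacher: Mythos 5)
Your proof is correct, but it takes a genuinely different route from the paper's. The paper's key mechanism is convex interpolation: if $\w^\star(\lambda_1)$ and $\w^\star(\lambda_2)$ share the same sign pattern, then $\theta\w^\star(\lambda_1)+(1-\theta)\w^\star(\lambda_2)$ satisfies the optimality conditions of Lemma~\ref{lemma:opt} at $\theta\lambda_1+(1-\theta)\lambda_2$, so each sign pattern occupies an interval of $\lambda$'s on which the path is affine; finiteness (at most $3^p$ patterns) then gives piecewise linearity, and continuity is deduced by showing that the one-sided limits of the piecewise-linear function satisfy the optimality conditions and hence, by uniqueness, equal $\w^\star(\lambda)$. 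Your mechanism is instead the affine-in-$\lambda$ structure of the closed-form candidates $\w_{J,\etab}(\lambda)$ from \refEq{eq:closed}: each validity region is an intersection of finitely many affine inequalities in one variable, hence a closed interval, these intervals cover $(0,\infty)$, and a slab decomposition plus uniqueness yields both the affine pieces and their matching at breakpoints. Both arguments are sound; what they buy differs. The paper's version is shorter and establishes the ``one segment per sign pattern'' correspondence that is explicitly reused in Proposition~\ref{prop:upper} (bounding the number of segments by $3^p$, then $(3^p+1)/2$); your version is more constructive---it exhibits the affine formula on each segment and locates the kinks, mirroring the mechanics of Algorithm~\ref{alg:homotopy}---and it replaces the limit argument for continuity with closedness of the validity intervals, which is arguably cleaner. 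You also supply the existence of a minimizer via coercivity, a step the paper attributes to Lemma~\ref{lemma:opt} but never actually spells out there. The one cosmetic weakness of your route is that the natural segment count it produces is the number of slabs (bounded by the number of interval endpoints, roughly $2\cdot 3^p$) rather than the sharper per-pattern count, but this is immaterial for the lemma as stated.
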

\vspace*{-0.2cm}
\begin{proof}
   The existence/uniqueness of the regularization path was shown in Lemma~\ref{lemma:opt}.
   
Let us define~$\{ \etab^\star(\lambda)\defin
\sign(\w^\star(\lambda)) : \lambda > 0\}$ the set of sparsity patterns.
   Let us now consider $\lambda_1 < \lambda_2$ such that
$\etab^\star(\lambda_1)=\etab^\star(\lambda_2)$.
   For all~$\theta \in [0,1]$, it is easy to see that the
solution~$\w^{\theta\star} \defin \theta\w^\star(\lambda_1) +
(1-\theta)\w^\star(\lambda_2)$ satisfies the optimality conditions of
Lemma~\ref{lemma:opt} for~$\lambda=\theta\lambda_1\!+\!(1\!-\!\theta)\lambda_2$, and
that
$\w^\star(\theta\lambda_1+(1\!-\!\theta)\lambda_2)\!=\w^{\theta\star}$.

This shows that whenever two solutions~$\w^\star(\lambda_1)$
and~$\w^\star(\lambda_2)$ have the same signs for $\lambda_1\!\neq\!\lambda_2$,
the regularization path between~$\lambda_1$ and~$\lambda_2$ is a linear
segment.
As an important consequence, the number of
linear segments of the path is smaller than~$3^p$,
the number of possible sparsity patterns in~$\{-1,0,1\}^p$. The path~$\mathcal P$ is therefore
piecewise linear with a finite number of~kinks. 

Moreover, since the function $\lambda \to \w^\star(\lambda)$ is piecewise
linear, it is piecewise continuous and has right and left limits for
every~$\lambda\!>\!0$. It is easy to show that these limits satisfy 
the optimality conditions of Eq.~(\ref{eq:opt}). By uniqueness of the Lasso
solution, they are equal to~$\w^\star(\lambda)$ and the function is in fact
continuous.
\end{proof}
Assuming again that~$\X_J$ is always full rank, we can now present in Algorithm~\ref{alg:homotopy} the homotopy
method~\citep{osborne2,efron}.

\vspace*{-0.1cm}
\begin{algorithm}[hbtp]
\caption{Homotopy Algorithm for the Lasso.}\label{alg:homotopy}
\begin{algorithmic}[1]
\STATE {\bfseries Inputs:} a vector~$\y$ in $\Real^n$; a matrix~$\X$ in~$\Real^{n \times p}$;
\STATE {\bfseries initialization:} set $\lambda$ to $\|\X^\top\y\|_\infty$; we have $\w^\star(\lambda)=0$ (trivial solution);
\STATE set $J\defin \{ j_0 \}$ such that~$|\x^{j_0\top}\y|
    =\lambda$;
     \WHILE{$\lambda > 0$} 
     \STATE Set~$\etab\defin\sign(\X^\top(\y\!-\!\X\w^\star(\lambda))$;
     \STATE
     \label{item:lars} compute the direction of the path: 
     \vspace*{-0.2cm}
     \begin{displaymath} 
     \left\{ \begin{array}{rcl}
     \w_J^\star(\lambda) & \!=\! &(\X_J^\top\X_J)^{-1}(\X_J^\top\y\!-\!\lambda \etab_J) \\
     \w^\star_{J^\complement}(\lambda)&\!=\!&0. 
     \end{array} \right.
     \vspace*{-0.35cm}
     \end{displaymath}
     \STATE Find the smallest step~$\tau > 0$ such that: \label{step:tau} \\
         $\bullet$~there exists $j \in J^\complement$ such that \\ $|\x^{j\top}(\y\!-\!\X\w^\star(\lambda\!-\!\tau))| =\lambda\!-\!\tau$; add $j$ to~$J$; \\
         $\bullet$~there exists $j$ in $J$ such that~$\w^\star_j(\lambda) \!\neq \!0$ and $\w^\star_j(\lambda\!-\!\tau)=0$; remove $j$ from $J$;
     \STATE replace~$\lambda$ by~$\lambda-\tau$; record the pair $(\lambda,\w^\star(\lambda))$; 
     \ENDWHILE
\STATE {\bf{Return:}} sequence of recorded values~$(\lambda,\w^\star(\lambda))$.
\end{algorithmic}
\end{algorithm}
\vspace*{-0.05cm}

It can be shown that this algorithm maintains the optimality conditions of
Lemma~\ref{lemma:opt} when~$\lambda$ decreases. Two assumptions have
nevertheless to be made for the algorithm to be correct.  First, $(\X_J^T
\X_J)$ has to be invertible, which is a reasonable assumption commonly made when
working with real data and when one is interested in sparse solutions. When
$(\X_J^T \X_J)$ becomes ill-conditioned, which may typically occur for small values of $\lambda$,
the algorithm has to stop and the path is truncated.
Second, one assumes in Step~\ref{step:tau} of the algorithm that the value~$\tau$ corresponds to a 
single event $|\x^{j\top}(\y\!-\!\X\w^\star(\lambda\!-\!\tau))| =\lambda\!-\!\tau$ for~$j$ in~$J^\complement$
or $\w^\star_j(\lambda\!-\!\tau)$ hits zero for~$j$ in~$J$.
In other words, variables enter or exit the path one at a time.
Even though this assumption is reasonable most of the time, it can be
problematic from a numerical point of view in rare cases.  When the length of a
linear segment of~$\mathcal P$ is smaller than the numerical precision, the
algorithm can fail.  In contrast, our approximate homotopy algorithm presented
in Section~\ref{sec:approx} is robust to this issue.  In the next section, we
present our worst-case complexity analysis of the regularization path, showing
that Algorithm~\ref{alg:homotopy} can have exponential complexity.

\section{Worst-Case Complexity}\label{sec:worst}
We denote by~$\{ \etab^\star(\lambda) \defin \sign(\w^\star(\lambda)) : \lambda
> 0 \}$  the set of sparsity patterns in~$\{-1,0,1\}^p$ encountered along the path~$\mathcal P$. We have seen in the proof
of Lemma~\ref{lemma:uniqpath} that whenever
$\etab^\star(\lambda_1)=\etab^\star(\lambda_2)$, for~$\lambda_1,\lambda_2 > 0$,
then $\etab^\star(\lambda)=\etab^\star(\lambda_1)$ for all~$\lambda \in
[\lambda_1,\lambda_2]$, and thus the number of linear segments of~$\mathcal P$
is upper-bounded by~$3^p$.
With an additional argument, we can further reduce this number, as stated in the
following proposition:
\begin{proposition}[{\bfseries Upper-bound Complexity}]~\label{prop:upper}\newline
Let assume the same conditions as in Lemma~\ref{lemma:uniqpath}.
The number of linear segments in the regularization path of the Lasso is less than $(3^p+1)/2$.
\end{proposition}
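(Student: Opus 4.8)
The plan is to sharpen the crude counting argument already used in the proof of Lemma~\ref{lemma:uniqpath}. There, every linear segment was matched to a distinct sign pattern $\etab^\star(\lambda)=\sign(\w^\star(\lambda))$ in $\{-1,0,1\}^p$, giving the bound $3^p$. To extract the extra factor of (roughly) two, I would prove that a nonzero pattern $\etab$ and its opposite $-\etab$ can never both be realized along the path. Granting this, the $3^p-1$ nonzero patterns organize into $(3^p-1)/2$ antipodal pairs, each contributing at most one segment, while the all-zero pattern (the trivial solution $\w^\star=0$ for large $\lambda$) is self-antipodal and contributes one more. The total is then at most $(3^p-1)/2+1=(3^p+1)/2$, using that each pattern occupies a single $\lambda$-interval (hence one segment) by Lemma~\ref{lemma:uniqpath}.

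The heart of the matter is thus the claim: for nonzero $\etab$ there are no $\lambda_1,\lambda_2>0$ with $\etab^\star(\lambda_1)=\etab$ and $\etab^\star(\lambda_2)=-\etab$. I would argue by contradiction. Let $S$ be the common support of $\etab$ and $-\etab$. Since $S$ is contained in the active set $J$ of each solution, the full-rank hypothesis of Lemma~\ref{lemma:uniqpath} transfers to the subset of columns $\X_S$, so $M\defin(\X_S^\top\X_S)^{-1}$ is well defined and positive definite. The closed form in \refEq{eq:closed} then gives $\w_S^\star(\lambda_1)=M\X_S^\top\y-\lambda_1 M\etab_S$ and $\w_S^\star(\lambda_2)=M\X_S^\top\y+\lambda_2 M\etab_S$, where the global sign flip has turned $-\lambda$ into $+\lambda$ in the second identity.

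I would then contract both identities against $\etab_S$. Because $\sign(\w_S^\star(\lambda_1))=\etab_S$ exactly on $S$, we get $\etab_S^\top\w_S^\star(\lambda_1)=\|\w_S^\star(\lambda_1)\|_1>0$; symmetrically, $\etab_S^\top\w_S^\star(\lambda_2)=-\|\w_S^\star(\lambda_2)\|_1<0$. Writing $\alpha\defin\etab_S^\top M\X_S^\top\y$ and $\mu\defin\etab_S^\top M\etab_S$, these become $\alpha-\lambda_1\mu>0$ and $\alpha+\lambda_2\mu<0$. Subtracting yields $(\lambda_1+\lambda_2)\mu<0$, which is impossible since $\lambda_1,\lambda_2>0$ and $\mu>0$ by positive-definiteness of $M$. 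This contradiction establishes the claim, and the counting above then completes the proof.

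The main obstacle is identifying the right sign-sensitive functional. Pairing a pattern with its antipode is the natural combinatorial move, but the quantities that are genuinely monotone along the path---such as $\|\w^\star(\lambda)\|_1$ or the optimal value---are blind to a global sign flip and therefore cannot separate $\etab$ from $-\etab$. The device that works is testing the closed-form solution against $\etab_S$ itself, which converts the two sign conditions into scalar inequalities and exposes the strictly positive term $\mu$. A secondary point to handle carefully is the bookkeeping: verifying that $S$ is truly common to both patterns and that the full-rank assumption descends from $J$ to its subset $S$, so that $M$ is positive definite and the inner-product argument is legitimate.
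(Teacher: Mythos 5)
Your proof is correct, and it reaches the paper's bound by a genuinely different argument for the key claim. Both you and the paper reduce the proposition to the same two ingredients: the antipodal-pairing count (at most one of $\{\etab,-\etab\}$ per nonzero pair, plus the zero pattern, giving $(3^p-1)/2+1$), and the claim that a nonzero pattern and its negation cannot both occur on the path. Where you diverge is in proving that claim. The paper introduces the reduced Lasso problem on the common support, observes that the least-squares solution $\tildew^\star(0)$ is an explicit convex combination of $\tildew^\star(\lambda_1)$ and $\tildew^\star(\lambda_2)$, and derives a contradiction with the monotonicity of $\lambda \mapsto \|\tildew^\star(\lambda)\|_1$ along the path --- a property the paper itself only asserts as ``easy to show.'' You instead contract the two closed-form solutions against $\etab_S$, turning the opposite-sign conditions into the scalar inequalities $\alpha - \lambda_1\mu > 0$ and $\alpha + \lambda_2\mu < 0$, and contradict positive definiteness of $(\X_S^\top\X_S)^{-1}$. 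Your route is more self-contained: it needs only the closed form and positive definiteness, and dispenses entirely with the auxiliary reduced path and the unproved monotonicity lemma; the paper's route, in exchange, exhibits a structural property of the path (monotone $\ell_1$-norm) that is of independent interest and reusable elsewhere. One small bookkeeping point: \refEq{eq:closed} is stated for the equality set $J$, not the support $S$; you should note explicitly that restricting the equalities of \refEq{eq:opt} to the support (where $\X\w^\star = \X_S \w_S^\star$) yields $\w_S^\star = (\X_S^\top\X_S)^{-1}(\X_S^\top\y - \lambda\,\sign(\w_S^\star))$, which together with your observation that full rank descends from $\X_J$ to $\X_S$ makes the step rigorous.
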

\begin{proof}
We have already noticed that the number of linear segments of the path is at
most~$3^p$. Let us consider $\etab^\star(\lambda_1)\!\neq\!0$ for~$\lambda_1\!>\!0$. 
We now show that for all~$\lambda_2\!>\!0$, we have~$\etab^\star(\lambda_2)\!\neq\!-\etab^\star(\lambda_1)$,
and therefore the number of different sparsity patterns on the path~$\mathcal P$ is in fact less than or equal
to $(3^p\!+\!1)/2$. 

Let us assume that there exists~$\lambda_2\!>\!0$ with
$\etab^\star(\lambda_2)\!=\!-\etab^\star(\lambda_1)$, and look for a contradiction.
We define the set~$J'\defin\{ j \in \{1,\ldots,p\} : \etab^\star_j(\lambda_1) \neq 0\}$, 
and consider the solution of the reduced problem for all~$\lambda \geq 0$:
 \begin{displaymath}
    \tildew^\star(\lambda) \defin  \argmin_{\tildew \in \Real^{|J'|}} \frac{1}{2}\|\y-\X_{J'}\tildew\|_2^2 + \lambda \|\tildew\|_1,
\end{displaymath}
which is well defined since the optimization problem is strictly convex
(the conditions of Lemma~\ref{lemma:uniqpath} imply that~$\X_{J'}$ is
full rank). 
We remark that $\tildew^\star(\lambda_1)=\w^\star_{J'}(\lambda_1)$, and
$\tildew^\star(\lambda_2)=\w^\star_{J'}(\lambda_2)$.
Given the optimality conditions of Lemma~\ref{lemma:opt}, it is then easy to show that 
$\tildew^\star(0) = (\X_{J'}^\top\X_{J'})^{-1}\X_{J'}^\top \y = \frac{\lambda_2}{\lambda_1+\lambda_2}\tildew^\star(\lambda_1) + \frac{\lambda_1}{\lambda_1+\lambda_2}\tildew^\star(\lambda_2)$.
Since the signs of $\tildew^\star(\lambda_1)$ and~$\tildew^\star(\lambda_2)$ are opposite to each other and non-zero, we have $\|\tildew^\star(0)\|_1 \!<\! \|\tildew^\star(\lambda_1)\|_1$.
Independently, it is also~easy to show that the function~$\lambda \!\to\!
\|\tildew^\star(\lambda)\|_1$ should be non-increasing,
and we obtain a contradiction.
\end{proof}

In the next proposition, we present our adversarial strategy to build a
pathological regularization path. Given a Lasso problem with~$p$ variables and
a path~${\mathcal P}$, we design an additional
variable along with an extra dimension, such that the number of kinks of the
new path~$\tilde{\mathcal P}$ increases by a multiplicative factor compared to~$\mathcal P$.  We call our strategy
``adversarial'' since it consists of iteratively designing ``pathological''
variables.

\begin{proposition}[{\bfseries Adversarial Strategy}]~\label{prop:strategy}\newline
Let us consider $\y$ in~$\Real^n$ and~$\X$ in~$\Real^{n \times p}$ such that the
conditions of Lemma~\ref{lemma:uniqpath} are satisfied and~$\y$ is in the span of~$\X$. We denote by ${\mathcal P}$ the regularization path of the Lasso problem corresponding to~$(\y,\X)$, 
by~$k$ the number of linear segments of~${\mathcal P}$, and by~$\lambda_1 > 0$ the smallest value of the parameter~$\lambda$ corresponding to a kink
of~${\mathcal P}$. We define the vector~$\tildey$ in~$\Real^{n+1}$ and the matrix~$\tildeX$ in~$\Real^{(n+1) \times (p+1)}$ as follows:
\begin{displaymath}
   \tildey \defin \left[ \begin{array}{c} \y \\ y_{n+1} \end{array} \right],~~~~ \tildeX \defin \left[ \begin{array}{cc} \X & 2\alpha\y \\ 0 & \alpha y_{n+1} \end{array} \right],
\end{displaymath}
where $y_{n+1}\neq0$ and $0<\alpha<\lambda_1 / ({2\y^\top\y+y_{n+1}^2})$.

Then, the regularization path~$\tilde{\mathcal P}$
of the Lasso problem associated to $(\tildey,\tildeX)$ exists and has
$3k\!-\!1$ linear segments.
Moreover, let us consider~$\{\etab^1\!=\!0, \etab^2, \ldots, \etab^k \}$ the sequence of sparsity patterns in~$\{-1,0,1\}^p$ of~${\mathcal P}$ (the signs of the solutions $\w^\star(\lambda)$),
ordered from large to small values of~$\lambda$. The sequence of sparsity patterns in~$\{-1,0,1\}^{p+1}$ of the new path~${\tilde{\mathcal P}}$ is the following:
\begin{multline}
 \!\!\!\!\! 
 \Bigg\{ \overbrace{\left[\!\!\!       \begin{array}{c} \etab^1 \\ 0 \end{array} \!\!\!\right], 
           \left[\!\!\!       \begin{array}{c} \etab^2 \\ 0 \end{array}\!\!\!\right], \dots,
           \left[\!\!\!       \begin{array}{c} \etab^k \\ 0 \end{array}\!\!\!\right]}^{\text{first}~k~\text{patterns}}, 
        \overbrace{\left[\!\!\!       \begin{array}{c} \etab^k \\ 1 \end{array}\!\!\!\right],
           \left[\!\!\!       \begin{array}{c} \etab^{k\!-\!1} \\ 1 \end{array}\!\!\!\right],
           \ldots, \left[\!\!\!       \begin{array}{c} \etab^1\!=\!0 \\ 1 \end{array}\!\!\!\right]}^{\text{middle}~k~\text{patterns}}, \\
          \underbrace{\left[\!\!\!       \begin{array}{c} -\etab^2 \\ 1 \end{array}\!\!\!\right], 
           \left[\!\!\!       \begin{array}{c} -\etab^3 \\ 1 \end{array}\!\!\!\right], \ldots,
           \left[\!\!\!       \begin{array}{c} -\etab^k \\ 1 \end{array}\!\!\!\right]}_{\text{last}~k\!-\!1~\text{patterns}}  \Bigg\}.\!\!\!
   \label{eq:path}
\end{multline}
\end{proposition}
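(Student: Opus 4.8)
The plan is to exhibit the entire path $\tilde{\mathcal P}$ explicitly by a guess-and-verify argument resting on the positive homogeneity of the Lasso, and then to count its segments using Lemma~\ref{lemma:uniqpath} together with Proposition~\ref{prop:upper}. Writing $\tildew=(\w,t)$ with $t=\tildew_{p+1}$, a direct computation gives the residual $\tildey-\tildeX\tildew=\big[(1-2\alpha t)\y-\X\w\,;\,y_{n+1}(1-\alpha t)\big]$, so that for a \emph{fixed} last coordinate $t$, minimizing over $\w$ is exactly a Lasso problem with design $\X$ and response $(1-2\alpha t)\y$. By positive homogeneity, the solution of the Lasso with response $\sigma\y$ at parameter $\lambda$ equals $\sigma\,\w^\star(\lambda/|\sigma|)$ and has sign pattern $\sign(\sigma)\,\etab^\star(\lambda/|\sigma|)$; I would use this with $\sigma\defin 1-2\alpha t$ and effective parameter $\mu\defin\lambda/|\sigma|$. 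This collapses the $(p+1)$-dimensional problem to a scalar problem in $t$ in which the original path reappears, rescaled and (when $\sigma<0$) sign-flipped, which is precisely the source of the three families of patterns in~\eqref{eq:path}.

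Concretely, I would split the candidate path into three pieces according to the sign of $\sigma$. (i) For large $\lambda$ the last coordinate stays at $t=0$ ($\sigma=1$), the problem collapses to the original one, and $\tildew=(\w^\star(\lambda),0)$ reproduces the first $k$ patterns $[\etab^i;0]$. The value $\lambda^\star$ at which $t$ leaves $0$ is found from the single inactive condition $|\tildex^{(p+1)\top}(\tildey-\tildeX\tildew)|\le\lambda$, which, using the identity $\y^\top\r(\lambda)=\|\r(\lambda)\|_2^2+\lambda\|\w^\star(\lambda)\|_1$, reads $2\alpha\,\y^\top\r(\lambda)+\alpha y_{n+1}^2\le\lambda$. (ii) For $\lambda<\lambda^\star$ one has $t>0$ and $\sigma\in(0,1)$; solving the active equation $\tildex^{(p+1)\top}(\tildey-\tildeX\tildew)=\lambda$ for $\sigma$ shows that as $\lambda$ decreases the effective parameter $\mu=\lambda/\sigma$ \emph{increases}, so the $\w$-block sweeps the original path backwards and realizes the middle patterns $[\etab^k;1],\dots,[\etab^1{=}0;1]$. (iii) When $\sigma$ reaches $0$ (i.e.\ $t=1/(2\alpha)$, where $\w=0$ and the pattern is $[0;1]$) it turns negative; then the $\w$-block carries sign pattern $-\etab^\star(\mu)$ and, as $\lambda\to0$, $\mu$ decreases back to $0$, giving the last patterns $[-\etab^2;1],\dots,[-\etab^k;1]$.

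Two ingredients make the pieces fit together. First, the bound $\alpha<\lambda_1/(2\y^\top\y+y_{n+1}^2)$ guarantees, via $\y^\top\r(\lambda)\le\|\y\|_2^2$ (which holds because $\|\y\|_2^2-\y^\top\r(\lambda)=\y^\top\X\w^\star(\lambda)=\|\X\w^\star(\lambda)\|_2^2+\lambda\|\w^\star(\lambda)\|_1\ge0$), that the phase-(i) condition still holds at $\lambda=\lambda_1$; hence $t$ leaves $0$ only \emph{below} $\lambda_1$, so phase (i) captures all $k$ original patterns, including $\etab^k$. The same bound yields $\|\tildeX^\top\tildey\|_\infty=\NormInf{\X^\top\y}$, so the new path starts exactly like the old one. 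Second, the hypothesis that $\y$ lies in the span of $\X$ forces $\r(\lambda)\to0$ as $\lambda\to0$, which pins down the behaviour near the origin (so that $\mu\to0$ and the pattern tends to $-\etab^k$) and is also what later permits iterating the construction. I would additionally verify the full-rank requirement of Lemma~\ref{lemma:uniqpath} for the augmented design: the active matrix is obtained from $\X_J$ by appending the column $\tildex^{p+1}$ together with a zero row, and because $\alpha y_{n+1}\neq0$ this column is linearly independent of the others whenever $\X_J$ is full rank, so uniqueness is preserved.

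Finally, the count follows cleanly. By Lemma~\ref{lemma:uniqpath} each sign pattern occurs on a single connected $\lambda$-interval, so the number of linear segments equals the number of distinct patterns visited, and the list in~\eqref{eq:path} has $k+k+(k-1)=3k-1$ entries. Their pairwise distinctness is exactly where Proposition~\ref{prop:upper} enters: since no nonzero $\etab^i$ and its negative both appear on the original path, each $[-\etab^i;1]$ with $i\ge2$ cannot coincide with any $[\etab^j;1]$, so all $3k-1$ patterns are distinct. I expect the main obstacle to be the monotonicity underlying phases (ii)–(iii): one must show that as the parameter $\lambda$ decreases monotonically, the effective parameter $\mu$ first rises from just below $\lambda_1$ up to $\NormInf{\X^\top\y}$ and then falls back to $0$, that $t$ increases monotonically through $1/(2\alpha)$, and that every transition is a single enter/exit event so that consecutive patterns differ in exactly one coordinate. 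Once these monotonicity and continuity facts are in hand, uniqueness (Lemma~\ref{lemma:opt}) identifies the constructed curve with $\tilde{\mathcal P}$ and the proof is complete.
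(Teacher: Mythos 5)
Your construction is essentially the paper's own: the reduction of the augmented problem to the original Lasso by fixing the last coordinate (your identity $\tildew=\sigma\,\w^\star(\lambda/|\sigma|)$ with $\sigma=1-2\alpha t$ is exactly the paper's Eq.~(\ref{eq:closed2})), the treatment of the first $k$ segments via the strict inactive condition $2\alpha\y^\top(\y-\X\w^\star(\lambda))+\alpha y_{n+1}^2\leq 2\alpha\|\y\|_2^2+\alpha y_{n+1}^2<\lambda_1$, the use of the span hypothesis to force the last coordinate $t(\lambda)$ to tend to $1/\alpha$, and the full-rank verification all appear there. The genuine gap is the step you yourself flag as ``the main obstacle'': phases (ii)--(iii) rest on the claim that, as $\lambda$ decreases, the effective parameter $\mu=\lambda/|\sigma|$ rises monotonically to $+\infty$, then falls back to $0$, with $t$ monotone and every transition a single enter/exit event. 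You never prove this, and it is not routine: $\mu(\lambda)$ is defined only implicitly by the active-coordinate equation $2\alpha\sigma\,\y^\top(\y-\X\w^\star(\mu))+\tfrac{1}{2}\alpha y_{n+1}^2(1+\sigma)=\lambda$, and an implicit-function computation brings in the derivative of the piecewise-linear map $\mu\mapsto\y^\top(\y-\X\w^\star(\mu))$ across all kinks of the original path; the relevant quantity is $\|\y-\X\w^\star(\mu)\|_2^2-\mu^2\etab_J^\top(\X_J^\top\X_J)^{-1}\etab_J$, whose sign is not controlled. (Indeed, the easiest way to get this monotonicity is to deduce it \emph{from} the proposition, so building the proof on it risks circularity.) Your final counting paragraph cannot fill the hole: it shows the $3k-1$ listed patterns are pairwise distinct, but not that they are the \emph{only} patterns visited, nor that each of them is visited, nor in which order---which is precisely what the monotonicity was supposed to deliver.

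The paper closes this step with no monotonicity at all, by squeezing between an upper bound and an intermediate-value argument, and you already have every ingredient needed to do the same. Upper bound: by Eq.~(\ref{eq:closed2}) every pattern on $\tilde{\mathcal P}$ has the form $[\pm\etab^{i\top},s]^\top$ with $s\in\{-1,0,1\}$, and the antipodal-exclusion argument of Proposition~\ref{prop:upper}, applied to the new path, allows at most one member of each pair of opposite nonzero patterns, hence at most $3k-1$ patterns in total. Lower bound: $\lambda\mapsto t(\lambda)$ is continuous, vanishes for $\lambda$ above some $\lambda_1'>0$, and tends to $1/\alpha$ as $\lambda\to 0^+$; so on a maximal interval where $t\in(0,\tfrac{1}{2\alpha})$ the continuous function $\lambda/|1-2\alpha t(\lambda)|$ sweeps all of $[\lambda_1',+\infty)$, which by Eq.~(\ref{eq:closed2}) forces every pattern $[\etab^{i\top},1]^\top$ to appear, and the interval where $t\in(\tfrac{1}{2\alpha},\tfrac{1}{\alpha})$ similarly produces every $[-\etab^{i\top},1]^\top$. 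Matching the two bounds shows that the visited patterns are exactly the $3k-1$ listed ones, each occupying a single $\lambda$-interval by the connectedness argument of Lemma~\ref{lemma:uniqpath}, and the stated order then follows from continuity alone; in particular, no claim that consecutive patterns differ in one coordinate is ever needed. Replacing your monotonicity step by this counting-plus-IVT argument makes your proof complete and essentially identical to the paper's.
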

Let us first make some remarks about this proposition:\\
~\hspace*{0.5cm}$\bullet$ According to Eq.~(\ref{eq:path}) the sparsity
patterns of the new path~$\tilde{\mathcal P}$ are related to those of~$\mathcal P$.
More precisely, they have either the form
$[\etab^{i\top}, 0]^\top$ or~$[\pm \etab^{i\top}, 1]^\top$, 
where~$\etab^i$ is a sparsity pattern in~$\{-1,0,1\}^p$ of~$\mathcal P$. \\
~\hspace*{0.5cm}$\bullet$ The last column
of~$\tildeX$ involves a factor~$\alpha$ that controls its norm.  
With~$\alpha$ small enough, the~$(p\!+\!1)$-th variable enters late the
path~${\tilde{\mathcal P}}$. As shown in Eq.~(\ref{eq:path}), the first~$k$ sparsity patterns of
${\tilde{\mathcal P}}$ do not involve this variable and are exactly the same as those of~$\mathcal P$. \\
~\hspace*{0.5cm}$\bullet$ Let us give some intuition about the pathological behavior of the path~$\tilde{\mathcal P}$.
The first~$k$ kinks of~$\tilde{\mathcal P}$ are the same as those of~$\mathcal P$, and after
these first~$k$ kinks we have $\y\approx \X\w^\star(\lambda)$. Then,
the $(p\!+\!1)$-th variable enters the path and
we heuristically have
\begin{equation}
\tildeX \left[\!\! \begin{array}{c} \w^\star(\lambda) \\ 0 \end{array} \!\!\right] + \left[\!\!\begin{array}{c} 0 \\ y_{n+1} \end{array}\!\!\right] \approx \tildey \approx \tildeX \left[\!\! \begin{array}{c} -\w^\star(\lambda) \\ 1/\alpha \end{array} \!\!\right]. \label{eq:approx}
\end{equation}
The left side of Eq.~(\ref{eq:approx}) tells us that when
the~$(p\!+\!1)$-th variable is inactive, the coefficients associated to the first~$p$ variables
should be close to~$\w^\star(\lambda)$.
At the same time, the right side of Eq.~(\ref{eq:approx}) tells us that when the~$(p\!+\!1)$-th variable is
active, these same~$p$ coefficients should be instead close to~$-\w^\star(\lambda)$.
According to Eq.~(\ref{eq:path}), the signs of these $p$ coefficients along the path switch from~$\etab^k\!=\!\sign(\w^\star(\lambda))$ to~$-\etab^k$ by following the sequence $\etab^k,\etab^{k-1},\ldots,(\etab^1\!=\!0=\!-\etab^1),-\etab^2,\ldots,-\etab^k$, resulting in a path with~$3k\!-\!1$ linear segments.
The proof below more rigorously describes this strategy:
\begin{proof}
{\bfseries Existence of the new regularization path}: \newline
Let us rewrite the Lasso problem for~$(\tildey,\tildeX)$. 
\begin{multline}
\min_{\tildew \in \Real^p, {\tilde w} \in \Real}\frac{1}{2}\left\|\tildey-\tildeX \left[\begin{array}{c} \tildew \\ {\tilde w} \end{array} \right] \right\|_2^2 + \lambda \left\|\left[\begin{array}{c}\tildew \\ {\tilde w} \end{array}\right]\right\|_1, \\
=   \min_{{\tildew \in \Real^p, {\tilde w} \in \Real}} 
       \frac{1}{2}\|(1\!-\!2\alpha{\tilde w})\y\!-\!\X\tildew\|_2^2 \!+\! \frac{1}{2}(y_{n+1}\!-\!\alpha y_{n+1} {\tilde w})^2 
       \\ \!+\! \lambda \|\tildew\|_1 \!+\! \lambda|{\tilde w}|. 
\label{eq:lassosplit}
\end{multline}
Let~$(\tildew^\star,{\tilde w}^\star)$ be a solution for a given~$\lambda > 0$. By fixing~${\tilde w}={\tilde w}^\star$ in Eq.~(\ref{eq:lassosplit}) and optimizing with respect to~$\tildew$, we obtain an equivalent problem to~(\ref{eq:lassosplit}):
\begin{displaymath}
   \min_{\tildew' \in \Real^p} \frac{1}{2}\|\y-\X\tildew'\|_2^2 + \frac{\lambda}{|1-2\alpha {\tilde w}^\star|}\|\tildew'\|_1,
\end{displaymath}
with the change of variable~$\tildew = (1-2\alpha {\tilde w}^\star)\tildew'$ and assuming $1-2\alpha {\tilde w}^\star \neq 0$.
The solution of this problem is unique since it is a point of~${\mathcal P}$
and we therefore have
\begin{equation}
   \tildew^\star = \left\{ \begin{array}{ll} (1-2\alpha {\tilde w}^\star) \w^\star\Big(\frac{\lambda}{|1-2\alpha {\tilde w}^\star|}\Big) & ~~\text{if}~~ {\tilde w}^\star \neq \frac{1}{2\alpha} \\
    0 & ~~\text{otherwise}
   \end{array} \right.. \label{eq:closed2}
\end{equation}
Since the last column of~$\tildeX$ is not in the span of the first~$p$ columns by construction of~$\tildeX$,
it is then easy to see that the conditions of Lemma~\ref{lemma:uniqpath} are necessarily satisfied and therefore
$(\tildew^\star,{\tilde w}^\star)$ is in fact the unique solution of Eq.~(\ref{eq:lassosplit}).
Since this is true for all~$\lambda >0$, the regularization path is well defined, and we denote from now on
the above solutions by~$\tildew^\star(\lambda)$ and~${\tilde w}^\star(\lambda)$.

{\bfseries Maximum number of linear segments}: \newline
We now show that the number of linear segments of
the path is upper-bounded by~$3k\!-\!1$. Eq.~(\ref{eq:closed2}) shows that
$\sign({\tildew}^\star(\lambda))$ has the form~$\pm\etab^i$, where~$\etab^i$ in~$\{-1,0,1\}^p$ is one of the~$k$ sparsity
patterns from~${\mathcal P}$, whereas we have three
possibilities for~$\sign({\tilde w}^\star(\lambda))$, namely $\{-1,0,+1\}$.
Since one can not have two non-zero sparsity patterns that are opposite to each other
on the same path, as shown in the proof of
Proposition~\ref{prop:upper}, the number of possible sparsity
patterns reduces to~$3k\!-\!1$.

{\bfseries Characterization of the first $k$ linear segments}: \newline
Let us consider~$\lambda \!\geq\! \lambda_1$ and show that $\tildew^\star(\lambda)\!=\!\w^\star(\lambda)$ and~${\tilde w}^\star(\lambda)\!=\!0$ by 
checking the optimality conditions of Lemma~\ref{lemma:opt}.
The first $p$ equalities/inequalities in Eq.~(\ref{eq:opt}) are easy
to verify, the last one being also satisfied: 
\begin{displaymath}
|2\alpha\y^\top(\y\!-\!\X\w^\star(\lambda)) \!+\! \alpha y_{n+1}^2| 
 \leq2 \alpha \|\y\|_2^2 + \alpha y_{n+1}^2 < \lambda_1,
\end{displaymath}
where the last inequality is obtained from the definition of~$\alpha$.
Since this inequality is strict, this also ensures that there exists~$0 < \lambda'_1 < \lambda_1$ such that $\tildew^\star(\lambda)=\w^\star(\lambda)$ and~${\tilde w}^\star(\lambda)=0$ for all~$\lambda \geq \lambda_1'$. We have therefore shown that the first~$k$ sparsity patterns of the regularization path are given in Eq.~(\ref{eq:path}).

{\bfseries Characterization of the last~$2k\!-\!1$ segments}: \newline
We mainly use here the form of Eq.~(\ref{eq:closed2}) and
a few continuity arguments to characterize the rest of the path. First, we remark that
for all $\beta$ in $[0,\frac{1}{\alpha})$, there exists
a value for~$\lambda > 0$ such that~${\tilde w}^\star(\lambda)=\beta$. This is true because:
(i) $\lambda \! \to \! \tilde{w}^\star(\lambda)$ is continuous; (ii) $\tilde{w}^\star(\lambda_1) \!=\! 0$; (iii) $\tilde{w}^\star(0^+) \!=\! \frac{1}{\alpha}$.
Point (i) was shown in Lemma~\ref{lemma:uniqpath}, point (ii) in the previous paragraph, and point (iii) is necessary to have the term $(y_{n+1}\!-\!\alpha y_{n+1} {\tilde w})^2$ in Eq.~(\ref{eq:lassosplit}) go to~$0$ when~$\lambda$ goes to~$0^+$.

We now consider two values~$\lambda_1',\lambda_2'\!>\!0$ such that $\tilde{w}^\star(\lambda_1')\!=\!0$, $\tilde{w}^\star(\lambda_2')\!=\!\frac{1}{2\alpha}$ and $\tilde{w}^\star(\lambda) \in (0,\frac{1}{2\alpha})$ for all~$\lambda \in (\lambda_1',\lambda_2')$. On this open interval, we have that $(1-2\alpha{\tilde w}^\star(\lambda)) \! > \! 0$, and the continuous function $\lambda \to \lambda / |(1-2\alpha{\tilde w}^\star(\lambda))|$ ranges from $\lambda_1'$ to~$+\infty$. Combining this observation with Eq.~(\ref{eq:closed2}), we obtain that all sparsity patterns of the form~$[\etab^{i\top} , 1 ]^\top$ for~$i$ in~$\{1,\ldots,k\}$ appear on the regularization path.
With similar continuity arguments, it is easy to show that all sparsity patterns of the form~$[-\etab^{i\top} , 1 ]^\top$ for~$i$ in~$\{1,\ldots,k\}$ appear on the path as well.

We had previously identified~$k$ of the sparsity patterns, and now have
identified $2k\!-\!1$ different ones. Since we have at most~$3k\!-\!1$ linear segments, the set of sparsity patterns on the path~$\tilde{\mathcal P}$ is
entirely characterized. The fact that the sequence of sparsity patterns
is the one given in Eq.~(\ref{eq:path}) can easily be shown by reusing
similar continuity arguments.
\end{proof}
With this proposition in hand, we can now state the main result of this section:
\begin{theorem}[{\bfseries Worst-case Complexity}]~\newline
In the worst case, the regularization path of the Lasso has exactly $(3^p+1)/2$ linear segments. 
\end{theorem}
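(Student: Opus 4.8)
The plan is to obtain the exact count by squeezing a lower bound against the upper bound already in hand. Since \refProp{prop:upper} shows that any Lasso path has at most $(3^p+1)/2$ linear segments, it suffices to construct, for every $p \geq 1$, a single instance whose path attains exactly this number; the two bounds then force equality. I would produce this instance by induction on $p$, using \refProp{prop:strategy} to pass from $p$ to $p+1$ variables. For the base case $p=1$, take $n=1$, $\X = [1]$ and $\y = 1$: the hypotheses of \refProp{prop:strategy} are satisfied ($\y$ is trivially in the span of $\X$ and the full-rank condition holds), and the path has exactly $2 = (3^1+1)/2$ segments, with a single kink at $\lambda_1 = 1 > 0$.

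For the inductive step, suppose a $p$-variable instance $(\y,\X)$ satisfying the hypotheses of \refProp{prop:strategy} has a path with $k = (3^p+1)/2$ segments. Applying the proposition yields a $(p+1)$-variable instance $(\tildey,\tildeX)$ whose path has $3k-1$ segments, and a direct computation gives
\[
3k - 1 = 3\cdot\frac{3^p+1}{2} - 1 = \frac{3^{p+1}+1}{2},
\]
which is precisely the claimed value for $p+1$ and, by \refProp{prop:upper}, also the maximum possible. Thus the construction is optimal at each step, and iterating it $p-1$ times from the base case exhibits an instance with exactly $(3^p+1)/2$ linear segments. Combining this lower bound with the upper bound of \refProp{prop:upper} proves the theorem.

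The delicate point, which I expect to be the main obstacle, is verifying that the newly built instance $(\tildey,\tildeX)$ again satisfies the hypotheses of \refProp{prop:strategy}, so that the construction can legitimately be chained. Two properties must persist through all $p-1$ iterations. First, $\tildey$ must remain in the span of $\tildeX$: writing $\y = \X\c$ (possible since $\y$ lies in $\range \X$), one checks that $\tildey = \tildeX \v$ with $\v \defin [-\c^\top, 1/\alpha]^\top$, where the last coordinate forces the coefficient $1/\alpha$ because $y_{n+1}\neq 0$. Second, the full-rank conditions of \refLemma{lemma:uniqpath} must continue to hold for the augmented design; here one invokes the fact—already noted inside the proof of \refProp{prop:strategy}—that the last column of $\tildeX$ is not in the span of the first $p$ columns, together with the prescribed choice of $\alpha$ in $(0, \lambda_1/(2\y^\top\y + y_{n+1}^2))$, and the observation that the new path, having finitely many segments over $\lambda > 0$, again possesses a well-defined smallest positive kink $\tilde\lambda_1$. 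Once these conditions are shown to propagate, the exponential lower bound—and hence the exact value $(3^p+1)/2$—follows immediately.
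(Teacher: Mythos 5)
Your proposal is correct and follows essentially the same route as the paper: start from the $p=1$ instance $\y=[1]$, $\X=[1]$ with $2$ segments, iterate Proposition~\ref{prop:strategy} to get the recursion $k_{p+1}=3k_p-1$, solve it to obtain $(3^p+1)/2$, and match this against the upper bound of Proposition~\ref{prop:upper}. Your explicit verification that the hypotheses propagate (in particular that $\tildey=\tildeX[-\c^\top,1/\alpha]^\top$ keeps $\tildey$ in the span of $\tildeX$) is a detail the paper leaves implicit, but it does not change the argument.
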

\begin{proof}
We start with~$n\!=\!p\!=\!1$, and define~$\y\!=\![1]$, and~$\X\!=\![1]$, leading to a
path with~$k=2$ segments. We then recursively apply
Proposition~\ref{prop:strategy}, keeping~$n\!=\!p$, choosing at iteration~$p+1$, $y_{p+1}\!=\!1$,
and a factor~$\alpha\!=\!\alpha_{p+1}$ satisfying the conditions of Proposition~\ref{prop:strategy}.
Denoting by~$k_p$ the number of linear segments at iteration~$p$, we have that~$k_{p+1}\!=\!3k_p\!-\!1$, 
and it is easy to show that~$k_p\!=\!(3^p\!+\!1)/2$. According to Proposition~\ref{prop:upper},
this is the longest possible regularization path.
Note that this example has a particularly simple shape:
\begin{displaymath}
   \y \defin \left[ \begin{array}{c} 1 \\ 1 \\ 1 \\ \vdots \\ 1  \end{array} \right],~~~~ \X \defin \left[ 
\begin{array}{ccccc} \alpha_1 & 2\alpha_2 & 2\alpha_3 & \ldots & 2\alpha_p \\ 
                    0 & \alpha_2 & 2\alpha_3 & \ldots & 2\alpha_p \\
                    0 & 0 & \alpha_3 & \ldots & 2\alpha_p \\
                    \vdots & \vdots & \vdots & \ddots & \vdots \\
                    0      &  0   &0   &  \ldots     & \alpha_p
\end{array} \right].
\end{displaymath}
\end{proof}
\subsection{Numerical Simulations}\label{sec:exp1}
We have implemented Algorithm~\ref{alg:homotopy} in Matlab, optimizing
numerical precision regardless of computational efficiency, which has allowed us
to check our theoretical results for small values of~$p$. For instance, we
obtain a path with $(3^p+1)/2=88\,574$ linear segments
for~$p=11$, and present such a pathological path in Figure~\ref{fig:path}. 
Note that when~$p$ gets larger, these examples quickly lead to
precision issues where some kinks are very close to each other. 
Our implementation and our pathological examples will be made publicly available.
In the next section, we
present more optimistic results on approximate regularization paths.
\begin{figure}
\centering
\includegraphics[width=0.75\linewidth]{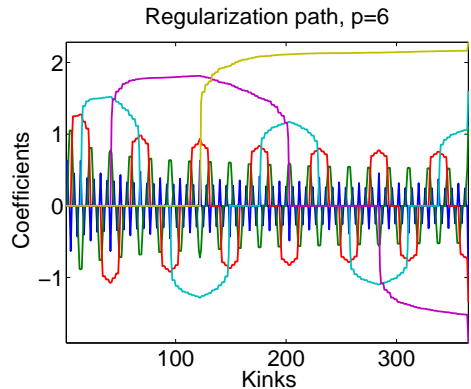}
\vspace*{-0.4cm}
\caption{Pathological regularization path with~$p\!=\!6$ variables and~$(3^6\!+\!1)/2\!=\!365$ kinks. The curves represent the values of the coefficients at every kink of the path. For visibility purposes, we use a non-linear scale and report the values $\sign(w)|w|^{0.1}$ for a coefficient~$w$. Best seen in color.}
\label{fig:path}
\vspace*{-0.3cm}
\end{figure}
\vspace*{-0.1cm}

\section{Approximate Homotopy}\label{sec:approx}
We now present another complexity analysis when exact solutions of
Eq.~(\ref{eq:lasso}) are not required. We follow in part the methodology
of~\citet{giesen2}, later refined by~\citet{giesen}, on approximate regularization paths of parameterized convex
functions.  Their results are quite general but, as we show later, we obtain
stronger results with an analysis tailored to the~Lasso.

A natural tool to guarantee the quality of approximate solutions is the duality
gap. Writing the Lagrangian of problem~(\ref{eq:lasso})
and minimizing with respect to the primal variable~$\w$ yields the following
dual formulation of~(\ref{eq:lasso}):
\begin{equation}
\vspace*{-0.1cm}
    \max_{\kappab \in \Real^n} -\frac{1}{2}\kappab^\top\kappab - \kappab^\top \y  \st \|\X^\top \kappab\|_\infty \leq \lambda, \label{eq:dual}
\vspace*{-0.05cm}
\end{equation}
where~$\kappab$ in~$\Real^n$ is a dual variable. Let us denote
by~$f_\lambda(\w)$ the objective function of the primal
problem~(\ref{eq:lasso}) and by~$g_\lambda(\kappab)$ the objective function of
the dual~(\ref{eq:dual}).  Given a pair of feasible primal and dual
variables~$(\w,\kappab)$, the difference~$\delta_\lambda(\w,\kappab) \defin
f_\lambda(\w)-g_\lambda(\kappab)$ is called a duality gap and provides an
optimality guarantee~\citep[see][]{borwein}:
\begin{displaymath}
   0 \leq f_\lambda(\w) - f_\lambda(\w^\star(\lambda))  \leq \delta_\lambda(\w,\kappab).
\end{displaymath}
In plain words, it upper bounds the difference between the current value of the
objective function~$f_\lambda(\w)$ and the optimal value of the objective function
$f_\lambda(\w^\star(\lambda))$.
In this paper, we use a relative duality gap criterion to guarantee
the quality of an approximate solution:\footnote{Note that our criterion is not exactly the same as in~\citet{giesen}. Whereas~\citet{giesen} consider a formulation where the~$\ell_1$-norm appears in a constraint, Eq.~(\ref{eq:lasso}) involves an~$\ell_1$-penalty. Even though these formulations have the same regularization path, they involve slightly different objective functions, dual formulations, and duality gaps. \label{ref:footnote1}}
\begin{definition}[{\bfseries $\varepsilon$-approximate Solution}]~\label{definition:opt1}\newline
let~$\varepsilon$ be in $[0,1]$. 
   A vector~$\w$ in~$\Real^p$ is said to be an~$\varepsilon$-approximate solution of problem~(\ref{eq:lasso}) if  
   there exists~$\kappab$ in~$\Real^n$ such that $\|\X^\top \kappab\|_\infty\! \leq\! \lambda$ and
      $\delta_\lambda(\w,\kappab)\! \leq\! \varepsilon f_\lambda(\w)$. 

   Given a set~$\tilde{\mathcal P} \defin \{\tildew(\lambda) \in \Real^p : \lambda > 0\}$, we say that~$\tilde{\mathcal P}$ is an~$\varepsilon$-approximate regularization path if any point~$\tildew(\lambda)$ of~$\tilde{\mathcal P}$ is an~$\varepsilon$-approximate solution for problem~(\ref{eq:lasso}).
\end{definition}
Our goal is now to build~$\varepsilon$-approximate regularization
paths and study their complexity.
To that effect, we introduce approximate optimality conditions
based on small perturbations of those given in
Lemma~\ref{lemma:opt}:
\begin{definition}[{\bfseries $OPT_\lambda(\varepsilon_1,\varepsilon_2)$ Condition}]~\label{definition:opt2}\newline
Let~$\varepsilon_1\! \geq\! 0$ and~$\varepsilon_2\! \geq\! -\varepsilon_1$. 
A vector~$\w$ in~$\Real^p$ satisfies the $OPT_\lambda(\varepsilon_1,\varepsilon_2)$ condition if and only if
 for all $1\!\leq\! j\!\leq\! p$, 
\begin{equation}
   \begin{split}
   \lambda(1\!-\!\varepsilon_2)\! \leq\! \x^{j\top}(\y\!-\!\X\w)\sign(\w_j)&\!\leq \!\lambda(1\!+\!\varepsilon_1) ~\text{if}~ \w_j\! \neq\! 0, \\
   |\x^{j\top}(\y\!-\!\X\w)|  &\!\leq\!\lambda(1\!+\!\varepsilon_1) ~\text{otherwise}.
   \end{split} \label{eq:optapprox}
\end{equation}
\end{definition}
\vspace*{-0.2cm}
Note that when~$\varepsilon_1\!=\!\varepsilon_2\!=\!0$, this condition reduces to the
exact optimality conditions of Lemma~\ref{lemma:opt}. Of interest for us is the
relation between Definitions~\ref{definition:opt1} and~\ref{definition:opt2}.
Let us consider a vector~$\w$ such that
$OPT_\lambda(\varepsilon_1,\varepsilon_2)$ is satisfied.
Then, the vector~$\kappab \defin \frac{1}{1+\varepsilon_1}(\X\w-\y)$ is feasible 
for the dual~(\ref{eq:dual}) and we can compute a duality~gap:
\begin{displaymath}
\vspace*{-0.1cm}
   \begin{split}
      \delta_{\lambda}(\w,\kappab) & = f_\lambda(\w) - g_\lambda(\kappab) \\
                                   & = \frac{1}{2}(1+\varepsilon_1)^2\kappab^\top\kappab + \lambda\|\w\|_1 + \frac{1}{2}\kappab^\top\kappab + \kappab^\top\y \\
                                   & = \frac{\varepsilon_1^2}{2}\kappab^\top\kappab + \lambda\|\w\|_1 + \kappab^\top\Big(\y+(1+\varepsilon_1)\kappab\Big) \\
                                   & = \frac{\varepsilon_1^2}{(1+\varepsilon_1)^2}\frac{1}{2}\|\y-\X\w\|_2^2 + \lambda\|\w\|_1 + \kappab^\top\X\w.
   \end{split}
\vspace*{-0.1cm}
\end{displaymath}
From Eq.~(\ref{eq:optapprox}), it is easy to show that~$\lambda\|\w\|_1+\kappab^\top\X\w \leq \frac{\varepsilon_1+\varepsilon_2}{1+\varepsilon_1}\lambda\|\w\|_1$, and we can obtain the following bound:
\begin{equation}
    \delta_{\lambda}(\w,\kappab) \leq \max\bigg( \frac{\varepsilon_1^2}{(1+\varepsilon_1)^2} , \frac{\varepsilon_1+\varepsilon_2}{1+\varepsilon_1} \bigg)f_\lambda(\w). \label{eq:dualitygap}
\end{equation}
From this upper bound, we derive our first result:
\begin{proposition}[{\bfseries Approximate Analysis}]~\label{prop:approx1}\newline
Let~$\y$ be in~$\Real^n$ and~$\X$ in~$\Real^{n \times p}$ such that the conditions of Lemma~\ref{lemma:uniqpath} are satisfied.
Let~$\lambda_{\infty}\!\defin\! \|\X^\top\y\|_\infty$ be the value of~$\lambda$ corresponding to the start of the path, and~$\lambda_1 \!>\! 0$ be the one corresponding to the last kink.
For all~$\varepsilon\!\in\!(0,1)$, there exists an $\varepsilon$-approximate regularization path with at most $\Big\lceil \frac{\log(\lambda_\infty / \lambda_1)}{\sqrt{\varepsilon}} \Big\rceil$ linear segments.
\end{proposition}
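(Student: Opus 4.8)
The plan is to build the approximate path as a \emph{piecewise constant} function: on suitable subintervals of $(\lambda_1,\lambda_\infty]$ I freeze a single exact solution $\w^\star(\mu)$ computed at a well-chosen anchor~$\mu$, and I certify its quality at neighbouring values of~$\lambda$ via an explicit feasible dual point. The natural certificate is the rescaled residual: for $\w\defin\w^\star(\mu)$ with residual $\r\defin\y-\X\w$, take $\kappab\defin-\tfrac{\lambda}{\mu}\r$. Since $\w$ satisfies the exact conditions of Lemma~\ref{lemma:opt} at~$\mu$, we have $\|\X^\top\r\|_\infty=\mu$ (the active coordinates attain~$\mu$, and at $\mu=\lambda_\infty$ the solution $\w=0$ still gives $\|\X^\top\y\|_\infty=\lambda_\infty$), hence $\|\X^\top\kappab\|_\infty=\tfrac{\lambda}{\mu}\mu=\lambda$ and $\kappab$ is feasible for the dual~(\ref{eq:dual}) at parameter~$\lambda$.

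The key step is to compute the resulting gap \emph{exactly} rather than bounding it through~(\ref{eq:dualitygap}). Using the active-set identity $\r^\top\X\w=\mu\|\w\|_1$ (again from Lemma~\ref{lemma:opt}) one gets $\r^\top\y=\|\y-\X\w\|_2^2+\mu\|\w\|_1$, and when this is substituted into $f_\lambda(\w)-g_\lambda(\kappab)$ every term linear in $\|\w\|_1$ cancels, leaving
$$\delta_\lambda(\w^\star(\mu),\kappab)=\tfrac12\|\y-\X\w^\star(\mu)\|_2^2\,\Big(1-\tfrac{\lambda}{\mu}\Big)^2.$$
Because $f_\lambda(\w)\geq\tfrac12\|\y-\X\w\|_2^2$, the \emph{relative} gap is at most $(1-\lambda/\mu)^2$. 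Thus, in the sense of Definition~\ref{definition:opt1}, $\w^\star(\mu)$ is an $\varepsilon$-approximate solution for every~$\lambda$ with $|1-\lambda/\mu|\leq\sqrt{\varepsilon}$, i.e.\ on the whole interval $[\mu(1-\sqrt{\varepsilon}),\,\mu(1+\sqrt{\varepsilon})]$. The quadratic vanishing at $\lambda=\mu$ is the crux: it is what turns the $O(1/\varepsilon)$ one would read off from~(\ref{eq:dualitygap}) into $O(1/\sqrt{\varepsilon})$.

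Next I would tile the parameter range with a geometric grid of anchors. Set $\mu_0\defin\lambda_\infty$ and $\mu_i\defin\lambda_\infty(1-\sqrt{\varepsilon})^i$, and define the approximate path by $\tildew(\lambda)\defin\w^\star(\mu_i)$ for $\lambda\in[\mu_{i+1},\mu_i]$. By the previous step each piece is $\varepsilon$-approximate throughout its interval (there $\lambda/\mu_i\in[1-\sqrt{\varepsilon},1]$), so the resulting step function is a valid $\varepsilon$-approximate path; for $\lambda\geq\lambda_\infty$ the first anchor returns the exact trivial solution $\w^\star=0$, needing no extra segment. I continue until the lower endpoint drops below~$\lambda_1$: the smallest $N$ with $\mu_N=\lambda_\infty(1-\sqrt{\varepsilon})^N\leq\lambda_1$ is $N=\big\lceil\log(\lambda_\infty/\lambda_1)\,/\,(-\log(1-\sqrt{\varepsilon}))\big\rceil$, and the elementary inequality $-\log(1-x)\geq x$ yields $N\leq\big\lceil\log(\lambda_\infty/\lambda_1)/\sqrt{\varepsilon}\big\rceil$, which is the claimed count.

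The main obstacle is conceptual rather than computational: one must resist plugging $\w^\star(\mu)$ into the generic bound~(\ref{eq:dualitygap}) with the matching $\varepsilon_1,\varepsilon_2$ (which yields only the linear estimate $|1-\lambda/\mu|\leq\varepsilon$, and hence $O(1/\varepsilon)$ segments) and instead evaluate the gap in closed form to expose its quadratic behaviour near~$\mu$. The only remaining care is bookkeeping at the ends of the path: verifying that $\w=0$ covers $\lambda\geq\lambda_\infty$ exactly, and noting that the region $\lambda\leq\lambda_1$, lying below the last kink where the exact path is a single linear segment, contributes nothing essential to the segment count.
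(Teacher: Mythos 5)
Your proof is correct and follows essentially the same route as the paper: both construct a piecewise-constant approximate path by sampling \emph{exact} solutions on the geometric grid $\lambda_\infty(1-\sqrt{\varepsilon})^i$ and bound the number of segments via $-\log(1-x)\geq x$. Your exact gap identity $\delta_\lambda\bigl(\w^\star(\mu),\kappab\bigr)=\tfrac{1}{2}\|\y-\X\w^\star(\mu)\|_2^2\,(1-\lambda/\mu)^2$ is precisely what the paper's bound~(\ref{eq:dualitygap}) specializes to for an exact solution (it uses the same rescaled-residual dual certificate, obtained there by checking $OPT_{\lambda(1-\varepsilon_3)}(\varepsilon_3/(1-\varepsilon_3),-\varepsilon_3/(1-\varepsilon_3))$, for which the term involving $\|\w\|_1$ vanishes), so the quadratic vanishing of the relative gap that you identify as the crux is exactly the mechanism the paper exploits to obtain the $O(1/\sqrt{\varepsilon})$ complexity.
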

\vspace*{-0.4cm}
\begin{proof}
From Eq.~(\ref{eq:optapprox}), one can show by a simple calculation that an exact
solution $\w^\star(\lambda)$ for a given~$\lambda$ satisfies
$OPT_{\lambda(1\!-\!\varepsilon_3)}(\varepsilon_3/(1\!-\!\varepsilon_3),-\varepsilon_3/(1\!-\!\varepsilon_3))$.
According to Eq.~(\ref{eq:dualitygap}), there exists a dual
variable~$\kappab$ such that~$\delta_{\lambda(1-\varepsilon_3)}(\w^\star(\lambda),\kappab) \!\leq\!
\varepsilon_3^2$. Thus, for any~$\lambda'$ chosen
in~$[\lambda,\lambda(1\!-\!\sqrt{\varepsilon})]$, the solution~$\w^\star(\lambda)$
is an $\varepsilon$-approximate solution for the parameter~$\lambda'$.
Between~$\lambda_\infty$ and~$\lambda_1$, we can obtain an~$\varepsilon$-approximate piecewise linear (in fact piecewise constant) regularization path
by sampling solutions~$\w^\star(\lambda)$ for~$\lambda$ in $\{\lambda_\infty, \lambda_\infty(1\!-\!\sqrt{\varepsilon}),\ldots,\lambda_\infty(1\!-\!\sqrt{\varepsilon})^k,\lambda_1\}$ with $\lambda_\infty(1\!-\!\sqrt{\varepsilon})^{k+1} \! \leq \! \lambda_1$. The number of segments of the corresponding approximate path is at most
$\Big\lfloor \frac{-\log(\lambda_\infty / \lambda_1)}{\log(1-\sqrt{\varepsilon})} \Big\rfloor \!+\!1 \!\leq\! \Big\lceil \frac{\log(\lambda_\infty / \lambda_1)}{\sqrt{\varepsilon}} \Big\rceil$.
\end{proof}
\vspace*{-0.1cm}
Note that the term~$\lambda_\infty / \lambda_1$ is possibly large, but it
is controlled by a logarithmic function and can be considered as constant
for finite precision machines. In other words, 
the complexity of the approximate path is upper-bounded
by~$O(1/\sqrt{\varepsilon})$.
In contrast, the analysis of~\citet{giesen2} and \citet{giesen} give us:\\
~\hspace*{0.5cm}$\bullet$~an approximate path with $O(1/\varepsilon)$ linear segments can be obtained with a weaker
approximation guarantee than ours. Namely, a bound~$\delta \!\leq\! \varepsilon$ along the path, where~$\delta$ is a 
duality gap, whereas we use \emph{relative}
duality gaps of the form~$\delta \!\leq\! \varepsilon
f_\lambda(\w)$;\footnote{When there exists~$m,M \!>\! 0$ such that~$m\!<\!f_\lambda\!<\!M$, the relative duality gap guarantee is
similar (up to a constant) to the simple bound~$\delta \leq \varepsilon$. However, we have for the Lasso that~$f_\lambda(\w^\star(\lambda)) \to 0$ when~$\lambda$ goes
to~$0^+$, as long as~$\y$ is in the span of~$\X$. Note that as noticed in footnote~\ref{ref:footnote1},~\citet{giesen} uses a slightly different duality gap than ours.}
Interestingly, this bound is proven to be optimal in the context of parameterized convex functions on the~$\ell_1$-ball. Our result show that such bound can be improved for the Lasso. \\
~\hspace*{0.5cm}$\bullet$~$\!\!$a methodology to obtain relative duality gaps along the path, which can easily 
provide complexity bounds for the full path of different problems, notably support vector machines, but not for the Lasso.

Proposition~\ref{prop:approx1} is optimistic, but not practical since
it requires sampling \emph{exact} solutions of the path~$\mathcal P$.  We
introduce an approximate homotopy method in Algorithm~\ref{alg:homotopy2} which
does not require computing exact solutions and still enjoys a similar complexity.
It exploits the piecewise linearity of the path, but uses a first-order method~\citep{beck,fu} when the linear segments
of the path are too short.
\vspace*{-0.1cm}
\begin{algorithm}[ht!]
\caption{Approximate Homotopy for the Lasso.}\label{alg:homotopy2}
\begin{algorithmic}[1]
\INPUT a vector~$\y$ in $\Real^n$, a matrix~$\X$ in~$\Real^{n \times p}$, the required precision~$\varepsilon \in [0,1]$; $\lambda_{1} > 0$;
\STATE {\bfseries initialization:} set $\lambda$ to $\|\X^\top\y\|_\infty$; set~$\tildew(\lambda)=0$;
\STATE set $\theta = 1+\varepsilon/2 - \sqrt{\varepsilon}/2$;
\STATE set $J\defin \{ j_0 \}$ such that~$|\x^{j_0\top}\y|=\lambda$;
\WHILE{$\lambda \geq \lambda_{1}$} 
\STATE {\bfseries if} $(\X_J^\top\X_J)$ is not invertible {\bfseries then} go to \ref{step:skip};\label{item:lars2}
\STATE set~$\tildeetab \defin (1/\lambda)\X^\top(\y\!-\!\X\tildew(\lambda))$;
\STATE \label{item:step2} compute the approximate direction of the path:
\vspace*{-0.2cm}
\begin{displaymath}
\left\{ \begin{array}{rcl}
\tildew_J(\lambda) &\!=\!& (\X_J^\top\X_J)^{-1}(\X_J^\top\y\!-\!\lambda \tildeetab_J) \\
\tildew_{J^\complement}(\lambda)&\!=\!&0.
\end{array}\right.
\vspace*{-0.1cm}
\end{displaymath}
Find the smallest step~$\tau > 0$ such that: \\
       $\bullet$~there exists $j$ in $J^\complement$ such that \\ 
       $|\x^{j\top}(\y\!-\!\X\tildew(\lambda\!-\!\tau))|\! =\!(\lambda\!-\!\tau)(1\!+\!\frac{\varepsilon}{2})$; add $j$ to~$J$; \\
       $\bullet$~there exists $j$ in $J$ such that~$\tildew_j(\lambda)\!\neq\! 0$ and $\tildew_j(\lambda\!-\!\tau)\!=\!0$; remove $j$ from $J$; \\
\IF{$\tau \geq \lambda\theta\sqrt{\varepsilon}$}
   \STATE replace~$\lambda$ by~$\lambda-\tau$;
\ELSE
   \STATE replace~$\lambda$ by~$\lambda(1-\theta\sqrt{\varepsilon})$; \label{step:skip}
   \STATE use a first-order optimization method to find a solution~$\tildew(\lambda)$ satisfying $OPT_\lambda(\varepsilon/2,\varepsilon/2)$; \label{item:step3}
   \STATE set~$J=\{ j \in \{1,\ldots,p\} : \tildew_j(\lambda) \neq 0 \}$.
\ENDIF
\STATE record the pair $(\lambda,\tildew(\lambda))$;
\ENDWHILE
\STATE {\bf{Return:}} sequence of recorded values~$(\lambda,\tildew(\lambda))$.
\end{algorithmic}
\end{algorithm}

\vspace*{-0.8cm}
Note that when~$\varepsilon\!=\!0$, Algorithm~\ref{alg:homotopy2} reduces to
Algorithm~\ref{alg:homotopy}.
Our approach exploits the following ideas, which we formally prove in the sequel. Assume that~$\tildew(\lambda)$ satisfies~$OPT_\lambda(\varepsilon/2,\varepsilon/2)$. Then, \\
~\hspace*{0.5cm}$\bullet$ $\tildew(\lambda)$ is an~$\varepsilon$-approximation for
   all~$\lambda'$ in~$[\lambda,\lambda(1-\theta\sqrt{\varepsilon})]$. This
   guarantees us that one can always make step sizes for~$\lambda$
   greater than or equal to~$\lambda\theta\sqrt{\varepsilon}$; \\
 ~\hspace*{0.5cm}$\bullet$ the direction followed in Step~\ref{item:step2} maintains 
 $OPT_\lambda(\varepsilon/2,\varepsilon/2)$, but when two kinks are too close to each other---that is, $\tau \!<\!
   \lambda\theta\sqrt{\varepsilon}$, we directly look for
   a solution for the parameter~$\lambda'\!=\!\lambda(1\!-\!\theta\sqrt{\varepsilon})$
   that satisfies $OPT_{\lambda'}(\varepsilon/2,\varepsilon/2)$.
   Any first-order method can be used for that purpose, e.g., a proximal
   gradient method~\citep{beck}, using the current value~$\tildew(\lambda)$ as
   a warm start. \\
Note also that when $(\X_J^\top\X_J)$ is not invertible, the method uses first-order steps.
The next proposition precisely describes the guarantees of our algorithm.
\begin{proposition}[{\bfseries Analysis of Algorithm~\ref{alg:homotopy2}}]~\label{prop:approx2}\newline
Let~$\y$ be in~$\Real^n$ and~$\X$ in~$\Real^{n \times p}$. 
 For all~$\lambda_1\!>\!0$ and~$\varepsilon\! \in \!(0,1)$, Algorithm~\ref{alg:homotopy2} returns
 an~$\varepsilon$-approximate regularization path
 on~$[\lambda_\infty,\lambda_1]$.  Moreover, it terminates in at most
 $\Big\lceil \frac{\log(\lambda_\infty /
 \lambda_{1})}{\theta\sqrt{\varepsilon}} \Big\rceil$ iterations,
 where~$\lambda_{\infty}\defin \|\X^\top\y\|_\infty$.  
 \end{proposition}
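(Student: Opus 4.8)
The plan is to run an induction over the iterations of the \textbf{while} loop, maintaining as a loop invariant that every recorded iterate $\tildew(\lambda)$ satisfies the $OPT_\lambda(\varepsilon/2,\varepsilon/2)$ condition of Definition~\ref{definition:opt2}, and to deduce $\varepsilon$-approximation of the reported path from this invariant together with the duality-gap bound \refEq{eq:dualitygap}. The invariant holds at initialization, where $\tildew=0$ and $\lambda=\lambda_\infty=\NormInf{\X^\top\y}$, since then all coordinates are inactive and $|\x^{j\top}\y|\le\lambda_\infty$. I would then prove two facts, corresponding to the two bullet points stated before the proposition, and combine them to obtain both correctness and the iteration bound.

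First I would establish the \emph{range} statement: if $\w$ satisfies $OPT_\lambda(\varepsilon/2,\varepsilon/2)$, then $\w$ is an $\varepsilon$-approximate solution for every $\lambda'\in[\lambda(1-\theta\sqrt{\varepsilon}),\lambda]$. Writing $\rho\defin\lambda/\lambda'\ge1$ and comparing the active inequalities of \refEq{eq:optapprox} at $\lambda$ and at $\lambda'$, a direct rescaling shows that $\w$ satisfies $OPT_{\lambda'}(\varepsilon_1',\varepsilon_2')$ with $\varepsilon_1'\defin\rho(1+\varepsilon/2)-1$ and $\varepsilon_2'\defin1-\rho(1-\varepsilon/2)$, the inactive inequality transferring because $\lambda'(1+\varepsilon_1')=\lambda(1+\varepsilon/2)$. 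Feeding these into \refEq{eq:dualitygap}, one term simplifies to $(\varepsilon_1'+\varepsilon_2')/(1+\varepsilon_1')=\varepsilon/(1+\varepsilon/2)<\varepsilon$ independently of $\rho$, while the other equals $(\varepsilon_1'/(1+\varepsilon_1'))^2$, which increases with $\rho$. The choice $\theta=1+\varepsilon/2-\sqrt{\varepsilon}/2$ is exactly what forces $\varepsilon_1'/(1+\varepsilon_1')=\sqrt{\varepsilon}$ at the extreme $\rho=1/(1-\theta\sqrt{\varepsilon})$, so that the relative gap is at most $\varepsilon$ on the whole interval. This balancing computation is the main obstacle and the place where the specific constant $\theta$ is pinned down; along the way I would also check $\varepsilon_1'\ge0$ and $\varepsilon_2'\ge-\varepsilon_1'$ so that Definition~\ref{definition:opt2} applies and the certificate $\kappab=(\X\w-\y)/(1+\varepsilon_1')$ stays dual-feasible.

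Next I would show that the homotopy direction of Step~\ref{item:step2} preserves the invariant. Since $\tildeetab=(1/\lambda)\X^\top(\y-\X\tildew(\lambda))$ and the update keeps $\tildew_{J^\complement}=0$, the normal-equation identity gives $\x^{j\top}(\y-\X\tildew(\lambda'))=\lambda'\tildeetab_j$ for every $j\in J$ as $\lambda'$ decreases along the segment; because $\tildeetab_j\sign(\tildew_j)\in[1-\varepsilon/2,1+\varepsilon/2]$ at the start of the segment and the signs of the active coordinates stay constant until one hits zero, the active part of $OPT_{\lambda'}(\varepsilon/2,\varepsilon/2)$ is maintained, and the smallest-step rule for $j\in J^\complement$ is precisely what keeps the inactive inequality valid until a new coordinate enters. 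In the fallback branch ($\tau<\lambda\theta\sqrt{\varepsilon}$, or $\X_J^\top\X_J$ singular) we jump to $\lambda'=\lambda(1-\theta\sqrt{\varepsilon})$ and re-establish $OPT_{\lambda'}(\varepsilon/2,\varepsilon/2)$ by a first-order solver; such a point exists because the exact $\w^\star(\lambda')$ already satisfies the stronger $OPT_{\lambda'}(0,0)$, and a convergent proximal method reaches the relaxed condition in finitely many steps. On the skipped interval $[\lambda',\lambda]$ I would report the held value $\tildew(\lambda)$, which the range statement of the previous paragraph certifies as $\varepsilon$-approximate. Combined with the per-segment argument, this shows every point of the returned path is $\varepsilon$-approximate, proving correctness.

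Finally, for the iteration count I would observe that each pass through the loop multiplies $\lambda$ by a factor at most $1-\theta\sqrt{\varepsilon}$: in the homotopy branch the guard $\tau\ge\lambda\theta\sqrt{\varepsilon}$ gives $\lambda-\tau\le\lambda(1-\theta\sqrt{\varepsilon})$, and the fallback branch sets $\lambda\leftarrow\lambda(1-\theta\sqrt{\varepsilon})$ exactly. Hence after $N$ iterations $\lambda\le\lambda_\infty(1-\theta\sqrt{\varepsilon})^N$, and the loop exits once $\lambda<\lambda_1$; solving $\lambda_\infty(1-\theta\sqrt{\varepsilon})^N\le\lambda_1$ and using $-\log(1-x)\ge x$ yields $N\le\log(\lambda_\infty/\lambda_1)/(-\log(1-\theta\sqrt{\varepsilon}))\le\log(\lambda_\infty/\lambda_1)/(\theta\sqrt{\varepsilon})$, which gives the stated ceiling. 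A minor check I would not skip is that $1-\theta\sqrt{\varepsilon}>0$ for $\varepsilon\in(0,1)$, so that the geometric factor is a genuine contraction.
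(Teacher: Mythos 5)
Your proof is correct and takes essentially the same route as the paper's: you maintain the $OPT_\lambda(\varepsilon/2,\varepsilon/2)$ invariant through both the homotopy and first-order branches, transfer it to smaller $\lambda'$ via rescaled approximate-optimality parameters (your $\rho$-parameterization with $\varepsilon_1'=\rho(1+\varepsilon/2)-1$, $\varepsilon_2'=1-\rho(1-\varepsilon/2)$ is exactly the paper's implication $OPT_\lambda(\varepsilon/2,\varepsilon/2)\Rightarrow OPT_{\lambda(1-\varepsilon_3)}\big((\varepsilon/2+\varepsilon_3)/(1-\varepsilon_3),(\varepsilon/2-\varepsilon_3)/(1-\varepsilon_3)\big)$ with $\varepsilon_3=\theta\sqrt{\varepsilon}$), bound the relative gap with \refEq{eq:dualitygap}, and derive the iteration count from the guaranteed geometric decrease of $\lambda$. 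The only difference is one of detail: you carry out explicitly the balancing computation showing that $\theta=1+\varepsilon/2-\sqrt{\varepsilon}/2$ is precisely the constant making the worst term of \refEq{eq:dualitygap} equal to $\varepsilon$ at the endpoint of the skipped interval, a verification the paper compresses into ``it is possible to show that the desired condition is satisfied.''
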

 \begin{proof}
 We first show that any solution on the path is an~$\varepsilon$-approximate solution.
 First, it is easy to check that~$OPT_\lambda(\varepsilon/2,\varepsilon/2)$
 is always satisfied at Step~\ref{item:lars2}. This
 is either a consequence of Step~\ref{item:step3}, or because the direction
 $\tildew_J(\lambda') = (\X_J^\top\X_J)^{-1}(\X_J^\top\y\!-\!\lambda'
 \tildeetab_J)$ maintains $OPT_{\lambda'}(\varepsilon/2,\varepsilon/2)$ when
 $\lambda'$ varies between~$\lambda$ and~\mbox{$\lambda\!-\!\tau$}. 
 From Eq.~(\ref{eq:dualitygap}), we obtain that~$\tildew(\lambda)$ is an
 $\varepsilon$-approximate solution whenever
 $OPT_\lambda(\varepsilon/2,\varepsilon/2)$ is satisfied.
 Thus, we only need to check that $\tildew(\lambda)$ is also an~$\varepsilon$-approximate solution for~$\lambda'$ in~$[\lambda,\lambda(1-\theta\sqrt{\varepsilon})]$:
 for~$\varepsilon_3 \geq 0$, it is easy to check that $OPT_\lambda(\varepsilon/2,\varepsilon/2)$ implies $OPT_{\lambda(1\!-\!\varepsilon_3)}((\varepsilon/2\!+\!\varepsilon_3)/(1\!-\!\varepsilon_3),(\varepsilon/2\!-\!\varepsilon_3)/(1\!-\!\varepsilon_3)).$  Setting~$\varepsilon_3\!=\!\theta\sqrt{\varepsilon}$ and using Eq.~(\ref{eq:dualitygap}), it is possible to show that the desired condition is satisfied.

 Since the step size for~$\lambda$ is always greater
 than~$\lambda\theta\sqrt{\varepsilon}$, the maximum number of
 iterations is upper-bounded by $\Big\lfloor \frac{-\log(\lambda_\infty / \lambda_1)}{\log(1-\theta\sqrt{\varepsilon})} \Big\rfloor+1 \leq \Big\lceil \frac{\log(\lambda_\infty / \lambda_{1)}}{\theta\sqrt{\varepsilon}} \Big\rceil$
 \end{proof}
 We remark that the scalar~$\theta$ is very close to~$1$ and therefore the
 complexity is similar to the one of Proposition~\ref{prop:approx1}, with a
 logarithmic function controlling the possibly large term~$\lambda_\infty /
 \lambda_{1}$.  This algorithm is practical in different aspects: (i)
 it is almost as simple to implement as the homotopy method; (ii) it is robust
 to cases where two kinks are too close for the classical homotopy method to work;
 (iii) it provides optimality guarantees along the path; (iv) whenever possible, 
 it explicitly exploits the piecewise linearity of the path.
 We next present experiments to verify our analysis.

\subsection{Numerical Simulations}
We have implemented Algorithm~\ref{alg:homotopy2} with a few modifications to
the code used in Section~\ref{sec:exp1}. The inner solver is 
a coordinate descent algorithm~\citep[see][]{fu}, with a stopping
criterion based on Definition~\ref{definition:opt2}.

We consider~$4$ datasets. The first one dubbed \textsf{SYNTH}
consists of a pure noise fitting
scenario with no statistical meaning.  The entries of the corresponding vector~$\y$ and
matrix~$\X$ are i.i.d. draws from a standard normal distribution. The next dataset
is called~\textsf{PATHOL} and is a pathological example
obtained from the analysis of Section~\ref{sec:worst}. Finally, we consider two
datasets based on real data, respectively dubbed~\textsf{MADELON}\footnote{\url{http://www.nipsfsc.ecs.soton.ac.uk/datasets/}.} and~\textsf{PCMAC}\footnote{\url{http://featureselection.asu.edu/datasets.php}.}.
For each dataset, we center and normalize the columns of~$\X$ and the vector~$\y$, and
choose the parameter~$\lambda_1$ corresponding to the last kink of the true~path.

\begin{table}[t]
\caption{Complexity results of $\varepsilon$-approximated regularization paths for four datasets with~$n$ observations and~$p$ variables. The number of linear segments is denoted by~$k$.} \label{table:complex}
\centering
\vspace*{0.1cm}
{\footnotesize
\begin{tabular}{|l||c|c|c|c|}
\hline
   & {\scriptsize \textsf{SYNTH}} & {\scriptsize \textsf{PATHOL}} & {\scriptsize \textsf{MADELON}} & {\scriptsize \textsf{PCMAC}} \\
\hline
$n$ & $1\,100$        & 11             & $2\,000$    & $1\,943$     \\   
\hline
$p$ & $1\,000$             & 11             & 500         & $3\,289$     \\   
\hline
\hline
$k$, full path & $1\,615$ & $88\,574$  & $517$  & $2\,561$    \\   
\hline
\hline
$k$, $\varepsilon\!=\!10^{-5}$ & $1\,297$ & $2\,744$ & $468$ & $1\,254$ \\   
\hline
$k$, $\varepsilon\!=\!10^{-4}$ & $686$ & $1\,071$ & $327$ & $444$ \\   
\hline
$k$, $\varepsilon\!=\!10^{-3}$ & $268$ & $405$ & $152$ & $155$ \\   
\hline
$k$, $\varepsilon\!=\!10^{-2}$ & $96$ & $146$ & $61$ & $53$ \\   
\hline
$k$, $\varepsilon\!=\!0.1$ & $34$ & $51$ & $22$ & $18$ \\   
\hline
$k$, $\varepsilon\!=\!0.25$ & $21$ & $32$ & $15$ & $11$ \\   
\hline
$k$, $\varepsilon\!=\!0.5$ & $14$ & $20$ & $10$ & $7$ \\   
\hline
\end{tabular}
}
\vspace*{-0.4cm}
\end{table}

For all datasets, we compute the full regularization path using
Algorithm~\ref{alg:homotopy} and several~$\varepsilon$-approximate regularization
paths using Algorithm~\ref{alg:homotopy2}.
Note that the path of~\textsf{PCMAC} was stopped around~$\lambda \!\approx\!
10^{-4}$ where the matrix~$\X_J^\top\X_J$ became ill-conditioned and the Lasso solution dense.
As a simple sanity check, we first experimentally verify the correctness of
Propositions~\ref{prop:approx1} and~\ref{prop:approx2}, by sampling
solutions on the approximate path we obtain, computing duality gaps, and checking that the solutions are indeed
$\varepsilon$-approximate.  We conclude that our experimental results match our
theoretical analysis. We present the different path complexities in
Table~\ref{table:complex}.

Interestingly, the complexity of the pathological example significantly reduces
when one is looking for an approximate solution. For example, for~$\varepsilon\!=\!10^{-3}$, the complexity of the
approximate path is less than~$0.5\%$ the one of the full path.  This significantly
contrasts with the pessimistic result obtained in Section~\ref{sec:worst}.
As expected, the two examples based on real data exhibit a path complexity
of the same order of the problem size, which also significantly reduces
when~$\varepsilon$ increases.

\section{Conclusion}\label{sec:ccl}
We have presented new results on the regularization path and thus on
homotopy methods for the Lasso. First, we have shown that the path has an exponential 
worst-case complexity, which, as far as we know, had never been formally proved before. 
Our second result is more optimistic, and shows that when an exact path
is not required, only a relatively small number of points on the path
need to be computed. Finally, we propose a practical approximate homotopy algorithm,
which can provide such approximate paths at a desired precision.


\section*{Acknowledgments}
This paper was supported in part by NSF grants SES-0835531, CCF-0939370,
DMS-1107000, DMS-0907632, and by ARO-W911NF-11-1-0114.

{
\bibliography{main}
\bibliographystyle{icml2012}
}

\end{document}